\documentclass{article}


\usepackage[final, nonatbib]{neurips_2020}
\usepackage[svgnames]{xcolor}



\usepackage[utf8]{inputenc} 
\usepackage[T1]{fontenc}    
\usepackage[colorlinks, citecolor=teal]{hyperref}       
\usepackage{url}            
\usepackage{booktabs}       
\usepackage{amsfonts}       
\usepackage{nicefrac}       
\usepackage{microtype}      

\title{Differentially-Private Federated Linear Bandits}

%

\author{%
  Abhimanyu Dubey {\normalfont and}  Alex Pentland\\
  Media Lab and Institute for Data, Systems and Society\\
  Massachusetts Institute of Technology\\
  \texttt{\{dubeya, pentland\}@mit.edu} \\
}

\usepackage{microtype}
\usepackage{graphicx}
\usepackage{subfigure}
\usepackage{booktabs} 
\usepackage{tcolorbox}
\usepackage{amsmath}
\usepackage{algorithm}      
\usepackage{algorithmic}
\usepackage{amsthm}
\usepackage{bbm}
\usepackage{bm}
\usepackage{multirow}
\usepackage{xfrac}
\usepackage{amssymb}  
\usepackage{tabularx} 
\usepackage{amsmath}  
\usepackage{mathtools}

\usepackage{titlesec}
\titlespacing*{\section}{0pt}{0.25\baselineskip}{0.25\baselineskip}

\newtheorem{theorem}{Theorem}

\newtheorem{corollary}{Corollary}
\newtheorem{lemma}{Lemma}

\newtheorem{proposition}{Proposition}
\newtheorem{definition}{Definition}
\newtheorem{remark}{Remark}

\DeclareMathOperator*{\argmax}{arg\,max}
\DeclareMathOperator*{\argmin}{arg\,min}
\usepackage{mathtools}
\DeclarePairedDelimiter{\ceil}{\lceil}{\rceil}


\begin{document}

\maketitle

\begin{abstract}
The rapid proliferation of decentralized learning systems mandates the need for differentially-private cooperative learning. In this paper, we study this in context of the contextual linear bandit: we consider a collection of agents cooperating to solve a common contextual bandit, while ensuring that their communication remains private. For this problem, we devise \textsc{FedUCB}, a multiagent private algorithm for both centralized and decentralized (peer-to-peer) federated learning. We provide a rigorous technical analysis of its utility in terms of regret, improving several results in cooperative bandit learning, and provide rigorous privacy guarantees as well. Our algorithms provide competitive performance both in terms of pseudoregret bounds and empirical benchmark performance in various multi-agent settings.
\end{abstract}

\section{Introduction}
The multi-armed bandit is the classical sequential decision-making problem, involving an agent sequentially choosing actions to take in order to maximize a (stochastic) reward~\cite{thompson1933likelihood}. It embodies the central \textit{exploration-exploitation} dilemma present in sequential decision-making. Practical applications of the multi-armed bandit range from recommender systems~\cite{zeng2016online} and anomaly detection~\cite{ding2019interactive} to clinical trials~\cite{durand2018contextual} and finance~\cite{huo2017risk}. Increasingly, however, such large-scale applications are becoming \textit{decentralized}, as their data is often located with different entities, and involves cooperation between these entities to maximize performance~\cite{dubey2020private, hardjono2019data}. This paradigm is now known as \textit{federated learning}\footnote{Originally, federated learning referred to the algorithm proposed in~\cite{konevcny2016federated} for supervised learning, however, now the term broadly refers to the distributed cooperative learning setting~\cite{kairouz2019advances}.}.

The objective of the federated paradigm is to allow cooperative estimation with larger amounts of data (from multiple clients, devices, etc.) while keeping the data decentralized~\cite{kairouz2019advances}. There has been a surge of interest in this problem from both academia and industry, owing to its overall applicability. Most research on provably private algorithms in the federated setting has been on distributed supervised learning~\cite{konevcny2016federated} and optimization~\cite{geyer2017differentially}. The contextual bandit problem, however, is a very interesting candidate for private methods, since the involved contexts and rewards both typically contain sensitive user information~\cite{malekzadeh2019privacy}. There is an increasing body of work on online learning and multi-armed bandits in cooperative settings~\cite{dubey2020kernel, landgren2016distributed, martinez2018decentralized}, and private single-agent learning~\cite{shariff2018differentially, malekzadeh2019privacy}, but methods for private \textit{federated} bandit learning are still elusive, despite their immediate applicability.

\textbf{Contributions}. In this paper, we study the \textit{federated} contextual bandit problem under constraints of differential privacy. We consider two popular paradigms: (a) \textit{centralized} learning, where a central controller coordinates different clients~\cite{kairouz2019advances, wang2020distributed} and (b) \textit{decentralized} peer-to-peer learning~\textit{with delays}, where agents communicate directly with each other, without a controller~\cite{martinez2018decentralized, landgren2016distributed, landgren2016distributed2, landgren2018social}. We provide a rigorous formulation of $(\varepsilon, \delta)$-differential privacy in the \textit{federated} contextual bandit, and present two variants of \textsc{FedUCB}, the first federated algorithm ensuring that each agent is private with respect to the data from all other agents, and provides a parameter to control communication.

Next, we prove rigorous bounds on the cumulative group pseudoregret obtained by \textsc{FedUCB}. In the centralized setting, we prove a high probability regret bound of $\widetilde{O}(d^{3/4}\sqrt{MT/\varepsilon})$ which matches the non-private bound in terms of its dependence on $T$ and $M$. In the decentralized case, we prove a corresponding regret bound of $\widetilde{O}(d^{3/4}\sqrt{(\text{diameter}(\mathcal G))MT/\varepsilon})$, where $\mathcal G$ is the communication network between agents. In addition to the regret bounds, we present a novel analysis of communication complexity, and its connections with the privacy budget and regret.
\section{Related Work}
\textbf{Multi-Agent and Distributed Bandits}. Bandit learning in multi-agent distributed settings has received attention from several academic communities. Channel selection in distributed radio networks consider the (context-free) multi-armed bandit with collisions~\cite{liu2010decentralized, liu2010distributed, liu2010distributed2} and cooperative estimation over a network with delays~\cite{landgren2016distributed, landgren2016distributed2, landgren2018social}. For the contextual case, recent work has considered non-private estimation in networks with delays~\cite{dubey2020cooperative, dubey2020kernel, wang2019distributed, korda2016distributed}. A closely-related problem is that of bandits with side information~\cite{cesa2013gang, buccapatnam2014stochastic}, where the single learner obtains multiple observations every round, similar to the multi-agent communicative setting. Our work builds on the remarkable work of Abbasi-Yadkori~\textit{et al.}\cite{abbasi2011improved}, which in turn improves the LinUCB algorithm introduced in~\cite{li2010contextual}.

\textbf{Differential Privacy}. Our work utilizes~\textit{differential privacy}, a cryptographically-secure privacy framework introduced by Dwork~\cite{dwork2011differential, dwork2014algorithmic} that requires the behavior of an algorithm to fluctuate only slightly (in probability) with any change in its inputs. A technique to maintain differential privacy for the continual release of statistics was introduced in~\cite{chan2010private, dwork2010differential}, known as the \textit{tree-based} algorithm that privatizes the partial sums of $n$ entries by adding at most $\log n$ noisy terms. This method has been used to preserve privacy across several online learning problems, including convex optimization~\cite{jain2012differentially, iyengar2019towards}, online data analysis~\cite{hardt2010multiplicative}, collaborative filtering~\cite{calandrino2011you} and data aggregation~\cite{chan2012privacy}. In the single-agent bandit setting, differential privacy using tree-based algorithms have been explored in the multi-armed case~\cite{thakurta2013nearly, mishra2015nearly, tossou2016algorithms} and the contextual case~\cite{shariff2018differentially}. In particular, our work uses several elements from~\cite{shariff2018differentially}, extending their single-agent results to the federated multi-agent setting. For the multi-agent multi-armed (i.e., context-free) bandit problem, differentially private algorithms have been devised for the centralized~\cite{tossou2015differentially} and decentralized~\cite{dubey2020private} settings. Empirically, the advantages of privacy-preserving contextual bandits has been demonstrated in the work of Malekzadeh \textit{et al.}\cite{malekzadeh2019privacy}, and Hannun~\textit{et al.}\cite{hannun2019privacy} consider a centralized multi-agent contextual bandit algorithm that use secure multi-party computations to provide privacy guarantees (both works do not have any regret guarantees).

To the best of our knowledge, this paper is the first to investigate differential privacy for contextual bandits in the federated learning setting, in both centralized and decentralized environments. Our work is most closely related to the important work of~\cite{abbasi2011improved, shariff2018differentially}, extending it to federated learning.

\section{Background and Preliminaries}
We use boldface to represent vectors, e.g., $\bm x$, and matrices, e.g., $\bm X$. For any matrix $\bm X$, its Gram matrix is given by $\bm X^\top \bm X$. Any symmetric matrix $\bm X$ is positive semi-definite (PSD) if $\bm y^\top \bm X \bm y \geq 0$ for any vector $\bm y$, and we denote PSD by $\bm X \succcurlyeq \bm 0$. For any PSD matrix $\bm X$ we denote the ellipsoid $\bm X$-norm of vector $\bm y$ as $\lVert \bm y \rVert_{\bm X} = \sqrt{\bm y^\top \bm X \bm y}$. For two matrices $\bm X$ and $\bm Y$, $\bm X \succcurlyeq \bm Y$ implies that $\bm X - \bm Y \succcurlyeq \bm 0$. For any PSD matrix $\bm X$ and vectors $\bm u, \bm v$, we denote the $\bm X$-ellipsoid inner product as $\langle \bm u, \bm v \rangle_{\bm X} = \bm u^\top \bm X \bm v$, and drop the subscript when $\bm X  = \bm I$. We denote the set $\{a, a+1, ..., b-1, b\}$ by the shorthand $[a, b]$, and if $a=1$ we refer to it as $[b]$. We denote the $\gamma^{th}$ power of graph $\mathcal G$ as $\mathcal G_\gamma$ ($\mathcal G_\gamma$ has edge $(i, j)$ if the shortest distance between $i$ and $j$ in $\mathcal G$ is $\leq\gamma$). $N_{\gamma}(v)$ denotes the set of nodes in $\mathcal G$ at a distance of at most $\gamma$ (including itself) from node $v \in \mathcal G$.

\textbf{Federated Contextual Bandit}. This is an extension of the linear contextual bandit~\cite{li2010contextual,abbasi2011improved} involving a set of $M$ agents. At every trial $t \in [T]$, each agent $i \in [M]$ is presented with a \textit{decision set} $\mathcal D_{i, t} \subset \mathbb R^d$ from which it selects an action $\bm x_{i, t} \in \mathbb R^d$. It then obtains a reward $y_{i, t} = \bm x_{i, t}^\top \bm \theta^* + \eta_{i, t}$ where $\bm\theta^* \in \mathbb R^d$ is an unknown (but fixed) parameter and $\eta_{i, t}$ is a noise parameter sampled i.i.d. every trial for every agent. The objective of the agents is to minimize the cumulative \textit{group pseudoregret}:\footnote{The \textit{pseudoregret} is an expectation (over the randomness of $\eta_{i, t}$) of the stochastic quantity \textit{regret}, and is more amenable to high-probability bounds. However, a bound over the pseudoregret can also bound the regret with high probability, e.g., by a Hoeffding concentration (see, e.g.,~\cite{valko2013finite}).}   \begin{equation*}
    \resizebox{0.91\hsize}{!}{%
    $\mathcal R_M(T) = \sum_{i=1, t=1}^{M, T} \langle \bm x^*_{i, t} - \bm x_{i, t}, \bm\theta^*\rangle, \text{ where }\bm x^*_{i, t} = \argmax_{\bm x \in \mathcal D_{i, t}} \langle \bm x, \bm \theta^*\rangle\text{ is optimal.}$
    }
\end{equation*}
In the single-agent setting, the optimal regret is $\widetilde{O}(\sqrt{dT})$, which has been matched by a variety of algorithms, most popular of them being the ones based on upper confidence bounds (UCB)~\cite{abbasi2011improved, li2010contextual, auer2002finite}. In \textit{non-private} distributed contextual bandits, a group pseudoregret of $\widetilde{O}(\sqrt{dMT})$ has been achieved~\cite{wang2019distributed, wang2020distributed}, which is the order-optimality we seek in our algorithms as well.

\textit{Assumptions}. We assume: \textit{(a)} bounded action set: $\forall i, t, \ \lVert \bm x_{i, t} \rVert \leq L$, \textit{(b)} bounded mean reward: $\forall \bm x, \langle \bm\theta^*, \bm x \rangle \leq 1$, \textit{(c)} bounded target parameter: $\lVert \bm\theta^* \rVert \leq S$, \textit{(d)} sub-Gaussian rewards: $\forall i, t, \ \eta_{i, t}$ is $\sigma$-sub-Gaussian, \textit{(e)} bounded decision set: $\forall i, t \ \mathcal D_{i, t}$ is compact, \textit{(f)} bounded reward: $\forall i, t, \ |y_{i, t}| \leq B$.\footnote{Assuming bounded rewards is required for the privacy mechanism, and is standard in the literature~\cite{agrawal2013further, agrawal2012analysis}.}

\textbf{Differential Privacy}. The contextual bandit problem involves two sets of variables that any agent must private to the other participating agents -- the available decision sets $(\mathcal D_{i, t})_{t\in[T]}$ and observed rewards $(y_{i, t})_{t\in[T]}$. The adversary model assumed here is to prevent any two colluding agents $j$ and $k$ to obtain non-private information about any specific element in agent $i$'s history. That is, we assume that each agent is a trusted entity that interacts with a new user at each instant $t$. Therefore, the context set $(\mathcal D_{i, t})$ and outcome $(y_{i,t})$ are sensitive variables that the user trusts only with the agent $i$. Hence, we wish to keep $(\mathcal D_{i, t})_{t\in[T]}$ private. However, the agent only stores the chosen actions $(\bm x_{i, t})_{t\in[T]}$ (and not all of $\mathcal D_{i, t}$), and hence making our technique differentially private with respect to $\left((\bm x_{i, t}, y_{i, t})\right)_{t\in[T]}$ will suffice. We first denote two sequences $S_i = \left((\bm x_{i, t}, y_{i, t})\right)_{t\in[T]}$ and $S'_i = \left((\bm x'_{i, t}, y'_{i, t})\right)_{t\in[T]}$ as $t-$neighbors if for each $t' \neq t$, $(\bm x_{i, t}, y_{i, t}) = (\bm x'_{i, t}, y'_{i, t})$. We can now provide the formal definition for federated differential privacy:
\begin{definition}[Federated Differential Privacy]
In a federated learning setting with $M \geq 2$ agents, a randomized multiagent contextual bandit algorithm $A = (A_i)_{i=1}^M$ is $(\varepsilon, \delta, M)$-federated differentially private under continual multi-agent observation if for any $i, j$ s.t. $i \neq j$, any $t$ and set of sequences $\bm S_i = (S_k)_{k=1}^M$ and $\bm S'_i = (S_k)_{k=1, k \neq i}^M \cup S'_i$ such that $S_i$ and $S'_i$ are $t$-neighboring, and any subset of actions $\mathcal S_{j} \subset \mathcal D_{j, 1} \times \mathcal D_{j, 2} \times ... \times \mathcal D_{j, T}$ of actions, it holds that: $$\mathbb P\left(A_j\left(\bm S_i\right) \in \mathcal S_{j}\right) \leq e^{\varepsilon}\cdot \mathbb P\left(A_j\left( \bm S'_i\right) \in \mathcal S_{j}\right) +\delta.$$
\end{definition}
Our notion of federated differential privacy is formalizing the standard intuition that ``the action chosen by any agent must be sufficiently impervious (in probability) to any single $(\bm x, y)$ pair from any other agent''. Here, we essentially lift the definition of joint differential privacy~\cite{shariff2018differentially} from the individual $(\bm x, y)$ level to the entire history $(\bm x_t, y_t)_t$ for each participating agent. Note that our definition in its current form does not require each algorithm to be private with respect to its own history, but only the histories belonging to other agents, i.e., each agent can be trusted with its own data. This setting can also be understood as requiring all \textit{outgoing communication} from any agent to be \textit{locally differentially private} \cite{yang2020local} to the personal history $(\bm x_t, y_t)_t$. We can alternatively relax this assumption and assume that the agent cannot be trusted with its own history, in which case the notion of joint or local DP at the individual level (i.e., $(\bm x_t, y_t)$ ) must be considered, as done in~\cite{yang2020local, shariff2018differentially}. 

The same guarantee can be obtained if each agent $A_i$ is $(\varepsilon, \delta)$-differentially private (in the standard contextual bandit sense, see~\cite{shariff2018differentially}) with respect to any other agent $j$'s observations, for all $j$. A composition argument~\cite{dwork2011differential}\footnote{Under the stronger assumption that each agent interacts with a completely different set of individuals, we do not need to invoke the composition theorem (as $\bm x_{1, t}, \bm x_{2, t}, ..., \bm x_{M, t}$ are independent for each $t$). However, in the case that one individual could potentially interact simultaneously with all agents, this is not true (e.g., when for some $t$, $\mathcal D_{i, t} = \mathcal D_{j, t} \ \forall i, j$) and we must invoke the $k$-fold composition Theorem\cite{dwork2010differential} to ensure privacy.} over all $M$ agents would therefore provide us $(\sqrt{2M\log(1/\delta')\varepsilon}+ M\varepsilon(e^\varepsilon-1), M\delta+\delta')$-differential privacy with respect to the overall sequence. To keep the notation simple, however, we adopt the $(\varepsilon, \delta, M)$ format. 
\section{Federated LinUCB with Differential Privacy}
In this section, we introduce our algorithm for federated learning with differential privacy. For the remainder of this section, for exposition, we consider the single-agent setting and drop an additional index subscript we use in the actual algorithm (e.g., we refer to the action at time $t$ as $\bm x_{t}$ and not $\bm x_{i, t}$ for agent $i$). We build on the celebrated LinUCB algorithm, an application of the optimism heuristic to the linear bandit case~\cite{li2010contextual,abbasi2011improved}, designed for the single-agent problem. The central idea of the algorithm is, at every round $t$, to construct a \textit{confidence set} $\mathcal E_t$ that contains $\bm\theta^*$ with high probability, followed by computing an upper confidence bound on the reward of each action within the decision set $\mathcal D_t$, and finally selecting the action with the largest UCB, i.e., $\bm x_t = \argmax_{\bm x \in \mathcal D_t} \left(\max_{\bm \theta \in \mathcal E_t} \langle \bm x, \bm \theta\rangle\right)$. The confidence set is an ellipsoid centered on the regularized linear regression estimate (for $\bm X_{< t} = \left[ \bm x_1^\top \ \bm x_2^\top \ ...\ \bm x_{t-1}^\top \right]^\top$  and $\bm y_{< t} = \left[ y_1 \ y_2 \ ...\ y_{t-1} \right]^\top$):
\begin{equation*}
    \resizebox{0.91\hsize}{!}{%
    $\mathcal E_t := \left\{ \bm\theta \in \mathbb R^d : \lVert \bm\theta - \hat{\bm\theta}_t \rVert_{\bm V_t} \leq \beta_t\right\}, \text{ where } \hat{\bm\theta}_t := \argmin_{\bm\theta \in \mathbb R^d} \left[\lVert \bm X_{<t}\bm\theta - \bm y_{<t} \rVert_{2}^2 + \lVert \bm\theta \rVert^2_{\bm H_t}\right].$
    }
\end{equation*}
The regression solution can be given by $\hat{\bm \theta_t} := (\bm G_t + \bm H_t)^{-1}\bm X_{<t}^\top \bm y_{<t}$, where $\bm G_t = \bm X_{< t}^\top \bm X_{< t}$ is the Gram matrix of actions, $\bm H_t$ is a (time-varying) regularizer, and $\beta_t$ is an appropriately chosen exploration parameter. Typically in non-private settings, the regularizer is constant, i.e., $\bm H_t = \lambda \bm I \ \forall t, \lambda >0$~\cite{abbasi2011improved, li2010contextual}, however, in our case, we will carefully select $\bm H_t$ to introduce privacy, using a strategy similar to~\cite{shariff2018differentially}. Given $\bm V_t = \bm G_t + \bm H_t$, let $\text{UCB}_t(\bm x; \bm \theta) = \langle \bm \theta, \bm x\rangle + \beta_t\lVert \bm x \rVert_{\bm V_t^{-1}}$.

In the federated setting, since there are $M$ learners that have distinct actions, the communication protocol is a key component of algorithm design: communication often creates \textit{heterogeneity} between agents, e.g., for any two agents, their estimators $(\hat{\bm \theta}_t)$ at any instant are certainly distinct, and the algorithm must provide a control over this heterogeneity, to bound the group regret. We additionally require that communication between agents is $(\varepsilon, \delta)$-private, making the problem more challenging.
\subsection{Centralized Environment with a Controller}
We first consider the centralized communciation environment where there exists a controller that coordinates communication between different agents, as is typical in large-scale distributed learning. We consider a set of $M$ agents that each are interacting with the contextual bandit, and periodically communicate with the controller, that synchronizes them with other agents. We present the algorithm \textsc{Centralized FedUCB} in Algorithm~\ref{alg:centralized_feducb} that details our approach.
\begin{algorithm}[t!]
\caption{\textsc{Centralized FedUCB}$(D, M, T, \rho_{\min}, \rho_{\max})$}
\label{alg:centralized_feducb}
\small
\begin{algorithmic}[1] 
\STATE \textbf{Initialization}: $\forall i$, set $\bm S_{i, 1} \leftarrow \bm M\rho_{\min}\bm I, \bm s_{i, 1} \leftarrow \bm 0, \widehat{\bm Q}_{i, 0} \leftarrow \bm 0, \bm U_{i, 1} \leftarrow  \bm 0, \bar{\bm u}_{i, 1} \leftarrow \bm 0$. 
\FOR{For each iteration $t \in [T]$}
\FOR{For each agent $i \in [M]$}
\STATE Set $\bm V_{i, t} \leftarrow \bm S_{i, t} + \bm U_{i, t}, \tilde{\bm u}_{i, t} \leftarrow \bm s_{i, t} + \bar{\bm u}_{i, t}$.
\STATE Receive $\mathcal D_{i, t}$ from environment.
\STATE Compute regressor $\bar{\bm\theta}_{i, t} \leftarrow \bm V_{i, t}^{-1}\tilde{\bm u}_{i, t}$.
\STATE Compute $\beta_{i, t}$ following Proposition~\ref{prop:beta_cent}.
\STATE Select $\bm x_{i, t} \leftarrow \argmax_{\bm x \in \mathcal D_{i, t}} \langle \bm x, \bar{\bm\theta}_{i, t} \rangle + \beta_{i, t}\lVert \bm x \rVert_{\bm V_{i, t}^{-1}}$.
\STATE Obtain $y_{i, t}$ from environment.
\STATE Update $\bm U_{i, t+1} \leftarrow \bm U_{i, t}+\bm x_{i, t}\bm x_{i, t}^\top, \bm u_{i, t+1} \leftarrow \bm u_{i, t} + \bm x_{i, t}y_{i, t}$. 
\STATE Update $\widehat{\bm Q}_{i, t} \leftarrow \widehat{\bm Q}_{i, t-1} + [\bm x_{i, t}^\top \ y_{i, t}]^\top [\bm x_{i, t}^\top \ y_{i, t}]$
\IF{$\log\det\left(\bm V_{i, t} + \bm x_{i, t}\bm x_{i, t}^\top + M(\rho_{\max}-\rho_{\min})\bm I\right) - \log\det\left(\bm S_{i, t}\right)\geq \sfrac{D}{\Delta t_i}$}
\STATE \textsc{Synchronize} $\leftarrow$ \textsc{True}.
\ENDIF
\IF{\textsc{Synchronize}}
\STATE [\textsc{$\forall$ Agents}] Agent sends $\widehat{\bm Q}_{i, t} \rightarrow \textsc{Privatizer}$ and gets $\widehat{\bm U}_{i, t+1}, \widehat{\bm u}_{i, t+1} \leftarrow \textsc{Privatizer}$.
\STATE [\textsc{$\forall$ Agents}] Agent communicates $\widehat{\bm U}_{i, t+1}, \widehat{\bm u}_{i, t+1}$ to controller.
\STATE [\textsc{Controller}] Compute $\bm S_{t+1} \leftarrow \sum_{i=1}^M \widehat{\bm U}_{i, t+1}, \bm s_{t+1} \leftarrow \sum_{i=1}^M \widehat{\bm u}_{i, t+1}$.
\STATE [\textsc{Controller}] Communicate $\bm S_{t+1}, \bm s_{i, t+1}$ back to agent.
\STATE [\textsc{$\forall$ Agents}] $\bm S_{i, t+1} \leftarrow \bm S_{t+1}, \bm s_{i, t+1} \leftarrow \bm s_{t+1}$.
\STATE [\textsc{$\forall$ Agents}] $\widehat{\bm Q}_{i, t+1} \leftarrow \bm 0$.
\ELSE
\STATE $\bm S_{i, t+1} \leftarrow \bm S_{i, t}, \bm s_{i, t+1} \leftarrow \bm s_{i, t}, \Delta t_i \leftarrow \Delta t_i + 1$.
\STATE $\Delta t_i \leftarrow 0, \bm U_{i, t+1}  \leftarrow  \rho_{\min}\bm I, \bar{\bm u}_{i, t+1} \leftarrow \bm 0$.
\ENDIF
\ENDFOR
\ENDFOR
\end{algorithmic}
\end{algorithm}

Algorithm~\ref{alg:centralized_feducb} works as follows. Consider an agent $i$, and assume that synchronization had last taken place at instant $t'$. At any instant $t > t'$, the agent has two sets of parameters - \textbf{(A)} the first being all observations up to instant $t'$ for all $M$ agents (including itself) and \textbf{(B)} the second being its own observations from instant $t'$ to $t$. Since \textbf{(A)} includes samples from other agents, these are privatized, and represented as the Gram matrix $\bm S_{t'+1} = \sum_{i \in [M]}{\widehat{\bm U}_{i, t'+1}}$ and reward vector $\bm s_{t'+1} = \sum_{i \in [M]}{\widehat{\bm u}_{i, t'+1}}$. Algorithm~\ref{alg:centralized_feducb} privatizes its own observations as well (for simplicity in analysis) and hence $\bm S, \bm s$ are identical for all agents at all times. Moreover, since the group parameters are noisy variants of the original parameters, i.e., $\widehat{\bm U}_{i, t} = \bm G_{i, t} + \bm H_{i, t}$ and $\widehat{\bm u}_{i, t} = \bm u_{i, t} + \bm h_{i, t}$ (where $\bm H_{i, t}$ and $\bm h_{i, t}$ are perturbations), we can rewrite $\bm S_t, \bm s_t$ as (for any instant $t > t'$),
\begin{align}
    \bm S_{t} = \sum_{i \in [M]} \left(\sum_{\tau=1}^{t'} \bm x_{i, \tau}\bm x_{i, \tau}^\top + \bm H_{i, t'}\right), \bm s_{t} = \sum_{i \in [M]} \left(\sum_{\tau=1}^{t'} y_{i, \tau}\bm x_{i, \tau} + \bm h_{i, t'}\right).
\end{align}
When we combine the group parameters with the local (unsynchronized) parameters, we obtain the final form of the parameters for any agent $i$ as follows (for any instant $t > t'$):
\begin{align}
    \bm V_{i, t} = \sum_{\tau=t'}^{t-1} \bm x_{i, \tau}\bm x_{i, \tau}^\top + \bm S_{t}, \tilde{\bm u}_{i, t} = \sum_{\tau=t'}^{t-1} y_{i, \tau}\bm x_{i, \tau} +  \bm s_{t}
\end{align}
Then, with a suitable sequence $(\beta_{i, t})_t$, the agent selects the action following the linear UCB objective:
\begin{align}
    \bm x_{i, t} = \argmax_{\bm x \in \mathcal D_{i, t}} \left( \langle \widehat{\bm\theta}_{i, t}, \bm x\rangle + \beta_{i, t}\lVert \bm x \rVert_{\bm V_{i, t}^{-1}}\right) \text{ where } \widehat{\bm\theta}_{i, t} = \bm V_{i, t}^{-1}\tilde{\bm u}_{i, t}.
\end{align}
The central idea of the algorithm is to therefore carefully perturb the Gram matrices $\bm V_{i, t}$ and the reward vector $\bm u_{i, t}$ with random noise $(\bm H_{i, t}, \bm h_{i, t})$ based on the sensitivity of these elements and the level of privacy required, as is typical in the literature~\cite{shariff2018differentially}.  First, each agent updates its local (unsynchronized) estimates. These are used to construct the UCB in a manner identical to the standard OFUL algorithm~\cite{abbasi2011improved}. If, for any agent $i$, the log-determinant of the local Gram matrix exceeds the synchronized Gram matrix ($\bm S_{i, t}$) by an amount $D/\Delta t_i$ (where $\Delta t_i$ is the time since the last synchronization), then it sends a signal to the controller, that synchronizes \textit{all} agents with their latest action/reward pairs. The synchronization is done using a \textit{privatized} version of the Gram matrix and rewards, carried out by the subroutine \textsc{Privatizer} (Section 4.3, Alg.~\ref{alg:privatizer}). This synchronization ensures that the \textit{heterogeneity} between the agents is controlled, allowing us to control the overall regret and limit communication as well:

\begin{proposition}[Communication Complexity]
\label{prop:communication_complexity}
If Algorithm~\ref{alg:centralized_feducb} is run with threshold $D$, then total rounds of communication $n$ is upper bounded by $ 2\sqrt{(\sfrac{dT}{D})\cdot \log\left(\sfrac{\rho_{\max}}{\rho_{\min}} + \sfrac{TL^2}{d\rho_{\min}}\right)}+4$.
\end{proposition}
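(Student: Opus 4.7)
My strategy is a classical pigeonhole plus log-determinant telescoping argument, in the spirit of~\cite{abbasi2011improved, wang2020distributed}. Let $0 = t_0 < t_1 < \ldots < t_n \leq T$ denote the synchronization rounds and $\Delta_\alpha := t_\alpha - t_{\alpha-1}$ the length of the $\alpha$-th epoch, so that $\sum_\alpha \Delta_\alpha \leq T$. At round $t_\alpha$ some agent $i_\alpha$ fires the trigger, and since synchronizations are global we have $\Delta t_{i_\alpha} = \Delta_\alpha$. Writing $\widetilde{\bm V}_{i_\alpha, t_\alpha} := \bm V_{i_\alpha, t_\alpha} + \bm x_{i_\alpha, t_\alpha}\bm x_{i_\alpha, t_\alpha}^\top + M(\rho_{\max}-\rho_{\min})\bm I$, the trigger condition rearranges to
$$\log\det(\widetilde{\bm V}_{i_\alpha, t_\alpha}) - \log\det(\bm S_{t_{\alpha-1}+1}) \ \geq \ D/\Delta_\alpha.$$

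Next, I would fix a threshold $R > 0$ (to be chosen) and split epochs into \emph{long} ($\Delta_\alpha > R$) and \emph{short} ($\Delta_\alpha \leq R$). At most $T/R$ epochs are long since $\sum_\alpha \Delta_\alpha \leq T$. For the short ones, I would argue that the inflation $M(\rho_{\max}-\rho_{\min})\bm I$ in the trigger condition is exactly the cushion needed to ensure the PSD ordering $\widetilde{\bm V}_{i_\alpha, t_\alpha} \preceq \bm S_{t_\alpha+1}$: after the sync, $\bm S_{t_\alpha+1}$ absorbs the cumulative privatized Gram matrices from all $M$ agents (hence the extra $\bm x_{j,t}\bm x_{j,t}^\top$ contributions of the non-triggering agents, which dominate agent $i_\alpha$'s local updates), and the worst-case swing of the \textsc{Privatizer}'s noise between two consecutive synchronization rounds is bounded spectrally by $M(\rho_{\max}-\rho_{\min})\bm I$. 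Given this ordering, the short-epoch trigger inequalities telescope to
$$\frac{D}{R}\cdot \bigl|\{\alpha : \Delta_\alpha \leq R\}\bigr| \ \leq \ \sum_{\alpha : \Delta_\alpha \leq R}\bigl[\log\det(\bm S_{t_\alpha+1}) - \log\det(\bm S_{t_{\alpha-1}+1})\bigr] \ \leq \ \log\det(\bm S_{t_n+1}) - \log\det(M\rho_{\min}\bm I),$$
using $\bm S_{t_0+1} = M\rho_{\min}\bm I$ and non-negativity of the intermediate pieces. The determinant-trace inequality, together with $\lVert \bm x_{i,t}\rVert \leq L$, then bounds the right-hand side by $d\log\bigl(\rho_{\max}/\rho_{\min} + TL^2/(d\rho_{\min})\bigr)$.

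Combining the two cases gives $n \leq T/R + (Rd/D)\cdot \log\bigl(\rho_{\max}/\rho_{\min} + TL^2/(d\rho_{\min})\bigr)$, and balancing via $R = \sqrt{TD/(d\log(\cdot))}$ yields the advertised $2\sqrt{(dT/D)\log(\rho_{\max}/\rho_{\min} + TL^2/(d\rho_{\min}))}$; the additive $+4$ absorbs constants and ceilings incurred when passing between integers. The main obstacle I expect is the PSD comparison $\widetilde{\bm V}_{i_\alpha, t_\alpha} \preceq \bm S_{t_\alpha+1}$ (equivalently, its log-det version), which is the step that forces one to commit to a specific high-probability guarantee of the \textsc{Privatizer} subroutine described in Section 4.3 --- namely that $\rho_{\min}\bm I \preceq \widehat{\bm U}_{j,t} - \bm G_{j,t}^{\text{cum}} \preceq \rho_{\max}\bm I$ for every agent at every synchronization round, so that the swing across consecutive rounds is covered by $M(\rho_{\max}-\rho_{\min})\bm I$. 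Everything else is a routine combination of the log-det lemma and the balancing step.
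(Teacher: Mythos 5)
Your proposal follows essentially the same route as the paper's own proof: split epochs into long and short by a threshold $R$, bound the long ones by pigeonhole ($\leq T/R$), telescope the trigger's log-determinant inequalities over the short ones against the total budget $d\log\bigl(\rho_{\max}/\rho_{\min} + TL^2/(d\rho_{\min})\bigr)$, and balance $R$ to get the stated bound. You are in fact somewhat more careful than the paper, which writes the trigger directly as a ratio of consecutive $\bm S$ determinants and does not explicitly justify the PSD ordering between the triggering quantity and the post-synchronization matrix that you correctly flag as the delicate step.
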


Now, the perturbations $\bm H_{i, t}, \bm h_{i, t}$ are designed keeping the privacy setting in mind. In our paper, we defer this to the subsequent section in a subroutine known as \textsc{Privatizer}, and concentrate on the performance guarantees first. The \textsc{Privatizer} subroutine provides suitable perturbations based on the privacy budget ($\varepsilon$ and $\delta$), and is inspired by the single-agent private algorithm of~\cite{shariff2018differentially}. In this paper, we assume these budgets to be identical for all agents (however, the algorithm and analysis hold for unique privacy budgets as well, as long as a lower bound on the budgets is known). In turn, the quantities $\varepsilon$ and $\delta$ affect the algorithm (and regret) via the quantities $\rho_{\min}$,  $\rho_{\max}$, and  $\kappa$ which can be understood as spectral bounds on $\bm H_{i, t}, \bm h_{i, t}$.
\begin{definition}[Sparsely-accurate $\rho_{\min}, \rho_{\max}$ and $\kappa$] Consider a subsequence $\bar{\bm \sigma}$ of $[T] = 1, ..., T$ of size $n$. The bounds $0 \leq \rho_{\min} \leq \rho_{\max}$ and $\kappa$ are $(\alpha/2nM, \bar{\bm \sigma})$-accurate for $\left(\bm H_{i, t}\right)_{i\in[M], t\in\bar{\bm \sigma}}$ and $\left(\bm h_{i, t}\right)_{i\in[M], t\in\bar{\bm \sigma}}$, if, for each round $t \in \bar{\bm \sigma}$ and agent $i$:
\begin{align*}
    \left\lVert \bm H_{i, t}\right\rVert \leq \rho_{\max}, \ \left\lVert \bm H_{i, t}^{-1}\right\rVert \leq 1/\rho_{\min}, \ \left\lVert \bm h_{i, t} \right\rVert_{\bm H_{i, t}^{-1}}  \leq \kappa; \text{ with probability at least } (1-\alpha/2nM).
\end{align*}
\end{definition}
The motivation for obtaining accurate bounds $\rho_{\min}, \rho_{\max}$ and $\kappa$ stems from the fact that in the non-private case, the quantities that determine regret are not stochastic conditioned on the obtained sequence $(\bm x_t, y_t)_{t\in[T]}$, whereas the addition of stochastic regularizers in the private case requires us to have control over their spectra to achieve any meaningful regret. To form the UCB, recall that we additionally require a suitable exploration sequence $\beta_{i, t}$ for each agent, which is defined as follows.
\begin{definition}[Accurate $(\beta_{i, t})_{i \in [M], t \in [T]}$] A sequence $(\beta_{i, t})_{i \in [M], t \in [T]}$ is $(\alpha, M, T)$-accurate for $(\bm H_{i, t})_{i \in [M], t \in [T]}$ and $(\bm h_{i, t})_{i \in [M], t \in [T]}$, if it satisfies $\lVert \tilde{\bm\theta}_{i, t} - \bm\theta^* \rVert_{\bm V_{i, t}} \leq \beta_{i, t}$ with probability at least $1-\alpha$ for all rounds $t = 1, ..., T$ and agents $i = 1, ..., M$ simultaneously.
\end{definition}
\begin{proposition}
\label{prop:beta_cent}
Consider an instance of the problem where synchronization occurs exactly $n$ times on instances $\bar{\bm \sigma}$, up to and including $T$ trials, and $\rho_{\min}, \rho_{\max}$ and $\kappa$ are $(\alpha/2nM)$-accurate. Then, for Algorithm~\ref{alg:centralized_feducb}, the sequence $(\beta_{i, t})_{i \in [M], t \in [T]}$ is $(\alpha, M, T)$-accurate where:
\begin{align*}
    \beta_{i, t} := \sigma\sqrt{2\log(\sfrac{2}{\alpha}) + d\log\left(\det(\bm V_{i, t})\right) - d\log(M\rho_{\min})} + MS\sqrt{\rho_{\max}} + M\gamma.
\end{align*}
\end{proposition}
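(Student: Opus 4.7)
The plan is to decompose $\tilde{\bm\theta}_{i,t}-\bm\theta^*$ using the explicit forms of $\bm V_{i,t}$ and $\tilde{\bm u}_{i,t}$, then control three separate terms: a self-normalized martingale term, a privacy noise aggregation term, and a regularization bias term. Let $\bm Z_{i,t}$ denote the matrix whose rows are all actions contributing to $\bm V_{i,t}$ (everyone's actions up to the last synchronization $t'$, followed by agent $i$'s own actions from $t'$ through $t-1$), with corresponding reward vector $\bm y$ and noise vector $\bm\eta$. Writing $\bm H_t := \sum_{i'\in[M]}\bm H_{i',t'}$ and $\bm h_t := \sum_{i'\in[M]}\bm h_{i',t'}$, we have $\bm V_{i,t}=\bm Z_{i,t}^\top\bm Z_{i,t}+\bm H_t$ and $\tilde{\bm u}_{i,t}=\bm Z_{i,t}^\top\bm Z_{i,t}\bm\theta^*+\bm Z_{i,t}^\top\bm\eta+\bm h_t$. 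Substituting yields
\begin{equation*}
\tilde{\bm\theta}_{i,t}-\bm\theta^* \;=\; \bm V_{i,t}^{-1}\bm Z_{i,t}^\top\bm\eta \;+\; \bm V_{i,t}^{-1}\bm h_t \;-\; \bm V_{i,t}^{-1}\bm H_t\bm\theta^*,
\end{equation*}
and the triangle inequality in the $\bm V_{i,t}$-norm gives three terms to bound.

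For the first term, I would apply the self-normalized martingale tail bound of Abbasi-Yadkori \textit{et al.} with regularizer $\bm H_t$ to the martingale $\bm Z_{i,t}^\top\bm\eta$ relative to the filtration generated by all agents' actions and rewards (in the natural interleaved order). Using $\bm H_t\succcurlyeq M\rho_{\min}\bm I$, so that $\log\det(\bm H_t)\geq d\log(M\rho_{\min})$, yields a bound of the claimed form $\sigma\sqrt{2\log(2/\alpha)+\log\det(\bm V_{i,t})-d\log(M\rho_{\min})}$ with failure probability at most $\alpha/2$. For the second term, the key observation is that $\bm V_{i,t}\succcurlyeq\bm H_t$, hence $\lVert\bm h_t\rVert_{\bm V_{i,t}^{-1}}\leq\lVert\bm h_t\rVert_{\bm H_t^{-1}}$; then by the triangle inequality and the fact that $\bm H_t\succcurlyeq\bm H_{i',t'}$ for each $i'$ (so $\bm H_t^{-1}\preccurlyeq\bm H_{i',t'}^{-1}$), we obtain $\lVert\bm h_t\rVert_{\bm H_t^{-1}}\leq\sum_{i'\in[M]}\lVert\bm h_{i',t'}\rVert_{\bm H_{i',t'}^{-1}}\leq M\kappa$ on the event that all per-agent spectral bounds hold. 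For the third term, $\lVert\bm H_t\bm\theta^*\rVert_{\bm V_{i,t}^{-1}}^2\leq\lVert\bm H_t\bm\theta^*\rVert_{\bm H_t^{-1}}^2=\bm\theta^{*\top}\bm H_t\bm\theta^*\leq S^2\lVert\bm H_t\rVert\leq MS^2\rho_{\max}$, which together with the (looser) aggregation across agents yields the term $MS\sqrt{\rho_{\max}}$.

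Finally, I would stitch these together by a union bound: the self-normalized inequality holds with probability at least $1-\alpha/2$, while the $(\alpha/2nM)$-accuracy of $\rho_{\min},\rho_{\max},\kappa$, union-bounded over the $n$ synchronization instants and $M$ agents, supplies the remaining $1-\alpha/2$. The resulting bound holds for every $(i,t)$ simultaneously because the spectral control is uniform in $(i,t)$ and $\bm V_{i,t}$ is monotone in the data added between synchronizations.

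\textbf{Main obstacle.} The trickiest step is the proper application of the self-normalized bound: one must justify that $(\eta_{i',\tau}\bm x_{i',\tau})$ forms a martingale difference sequence with respect to a filtration compatible with the communication protocol, given that agent $i$'s action at time $t$ depends on privatized aggregates involving other agents' noise $\bm h_{i',t'}$. The perturbations $\bm H_t,\bm h_t$ are measurable with respect to the history at the synchronization instant, so they are fixed for the subsequent self-normalized concentration; establishing this rigorously (and justifying the use of $\bm H_t$ rather than a scalar multiple of identity as the regularizer in the concentration inequality) is the main care point. The remaining ingredients---the PSD ordering manipulations and the summation of per-agent spectral bounds---are routine once the decomposition is in place.
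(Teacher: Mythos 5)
Your proposal is correct and follows essentially the same route as the paper's proof: the identical three-term decomposition of $\tilde{\bm\theta}_{i,t}-\bm\theta^*$ into a self-normalized martingale term, a privacy-noise term, and a regularization-bias term, each controlled via the same PSD orderings ($\bm V_{i,t}\succcurlyeq\bm H_t\succcurlyeq\bm H_{j,t'}$ and $\bm V_{i,t}\succcurlyeq\bm G_{i,t}+M\rho_{\min}\bm I$) and the same union bound over the $(\alpha/2nM)$-accurate spectral events. The only cosmetic difference is that the paper sidesteps your flagged care point about using the random $\bm H_t$ as the regularizer in the self-normalized inequality by first passing to the deterministic regularizer $M\rho_{\min}\bm I$ via PSD domination and then invoking Abbasi-Yadkori's Theorem 1 directly.
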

The key point here is that for the desired levels of privacy $(\varepsilon, \delta)$ and synchronization rounds $(n)$, we can calculate appropriate $\rho_{\min}, \rho_{\max}$ and $\kappa$, which in turn provide us a UCB algorithm with guarantees. Our basic techniques are similar to~\cite{abbasi2011improved, shariff2018differentially}, however, the multi-agent setting introduces several new aspects in the analysis, such as the control of heterogeneity.
\begin{theorem}[Private Group Regret with Centralized Controller]
\label{thm:regret_cent}
Assuming Proposition~\ref{prop:beta_cent} holds, and synchronization occurs in at least $n = \Omega\left(d\log\left(\sfrac{\rho_{\max}}{\rho_{\min}} + \sfrac{TL^2}{d\rho_{\min}}\right)\right)$ rounds, Algorithm~\ref{alg:centralized_feducb} obtains the following group pseudoregret with probability at least $1-\alpha$:
\begin{align*}
    \mathcal R_M(T)  &= O\left(\sigma\sqrt{MTd}\left(\log\left(\sfrac{\rho_{\max}}{\rho_{\min}} + \sfrac{TL^2}{d\rho_{\min}}\right) +\sqrt{\log\sfrac{2}{\alpha}} + MS\sqrt{\rho_{\max}} +M\kappa\right)\right).
\end{align*}
\end{theorem}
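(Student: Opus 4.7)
The plan is to follow the classical optimism-based LinUCB regret decomposition of Abbasi-Yadkori \emph{et al.}, but lifted to the federated setting so that heterogeneity across agents, the privacy regularizers $\bm H_{i,t}, \bm h_{i,t}$, and the synchronization schedule all enter through controllable potentials. First I condition on the good event of Proposition~\ref{prop:beta_cent}, which holds with probability at least $1-\alpha$ and which gives $\bm\theta^\star \in \mathcal E_{i,t}$ uniformly in $i, t$. On this event, the usual optimistic argument (UCB of the chosen arm dominates that of $\bm x^\star_{i,t}$, which in turn dominates $\langle \bm x^\star_{i,t}, \bm\theta^\star\rangle$) yields the per-step bound $\langle \bm x^\star_{i,t} - \bm x_{i,t}, \bm\theta^\star\rangle \leq 2\beta_{i,t}\lVert \bm x_{i,t}\rVert_{\bm V_{i,t}^{-1}}$, and boundedness of mean rewards lets me truncate to $2\min\{1, \beta_{i,t}\lVert \bm x_{i,t}\rVert_{\bm V_{i,t}^{-1}}\}$.

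Next I sum over the $MT$ agent-round pairs and apply Cauchy--Schwarz:
\begin{equation*}
\mathcal R_M(T) \leq 2\Bigl(\max_{i,t}\beta_{i,t}\Bigr)\sqrt{MT \cdot \sum_{i=1}^M \sum_{t=1}^T \min\bigl\{1, \lVert \bm x_{i,t}\rVert_{\bm V_{i,t}^{-1}}^2\bigr\}}.
\end{equation*}
The remaining task is to bound the double sum by a multi-agent elliptic potential. The key obstacle is that $\bm V_{i,t}$ is not monotone in $t$ in the single-agent sense: it equals the global privatized matrix $\bm S_t$ frozen at the last synchronization time $t'$, plus the agent's own unsynchronized outer products since $t'$. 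I would introduce the ``virtual'' global matrix $\bm V^\star_t := M\rho_{\min}\bm I + \sum_{j,\tau\leq t-1}\bm x_{j,\tau}\bm x_{j,\tau}^\top$ and show by induction on synchronization epochs that $\bm V^\star_t \preceq c\, \bm V_{i,t}$ for an absolute constant $c$ (plus a slack of order $M(\rho_{\max}-\rho_{\min})\bm I$ from the spectrum bounds in the definition of sparsely-accurate $\rho_{\min},\rho_{\max}$). The synchronization trigger $\log\det(\bm V_{i,t} + \bm x_{i,t}\bm x_{i,t}^\top + M(\rho_{\max}-\rho_{\min})\bm I) - \log\det(\bm S_{i,t}) < D/\Delta t_i$ between syncs is exactly what controls this ratio, because summed over an epoch of length $\Delta t_i$ the total log-determinant drift is at most $D$.

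Once this domination is established, I can invoke the single-agent elliptic potential lemma on the global matrix $\bm V^\star_t$ (with $Md$ and $M\rho_{\min}$ in place of $d$ and $\lambda$), pick up the factor controlling heterogeneity (at most $e^{D}$ in the determinant, i.e., an additive $O(d\log(\rho_{\max}/\rho_{\min}+TL^2/(d\rho_{\min})))$ in $\log\det$ after invoking Proposition~\ref{prop:communication_complexity} and the hypothesis $n = \Omega(d\log(\cdot))$), and obtain
\begin{equation*}
\sum_{i,t}\min\bigl\{1, \lVert \bm x_{i,t}\rVert_{\bm V_{i,t}^{-1}}^2\bigr\} = O\bigl(d \log\bigl(\rho_{\max}/\rho_{\min} + TL^2/(d\rho_{\min})\bigr)\bigr).
\end{equation*}
Finally I substitute the explicit $\beta_{i,t}$ from Proposition~\ref{prop:beta_cent}, observing that $\det(\bm V_{i,t}) \leq (M\rho_{\max} + TL^2/d)^d$ so $\log\det$ contributes another $d\log(\rho_{\max}/\rho_{\min} + TL^2/(d\rho_{\min}))$, and collect terms to reach the stated $O\bigl(\sigma\sqrt{MTd}(\log(\cdot) + \sqrt{\log(2/\alpha)} + MS\sqrt{\rho_{\max}} + M\kappa)\bigr)$ bound. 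I expect the heterogeneity control in the third paragraph to be the main technical hurdle: tracking how the privatized snapshots $(\widehat{\bm U}_{i,t}, \widehat{\bm u}_{i,t})$ interact with the local residual matrices through successive synchronization epochs is where the single-agent analysis genuinely needs to be rebuilt, while the opening and closing steps are essentially bookkeeping on top of \cite{abbasi2011improved,shariff2018differentially}.
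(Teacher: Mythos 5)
Your overall architecture (condition on the $\beta$-accuracy event, optimism gives $r_{i,t}\le 2\beta_{i,t}\lVert \bm x_{i,t}\rVert_{\bm V_{i,t}^{-1}}$, Cauchy--Schwarz over $MT$ agent-rounds, compare each agent's matrix to that of a hypothetical agent who sees all $MT$ actions sequentially, then apply the elliptic potential lemma) matches the paper's proof. But the step you flag as the main hurdle is where the argument actually breaks: the claimed uniform Loewner domination $\bm V^\star_t \preceq c\,\bm V_{i,t}$ for an \emph{absolute} constant $c$ is not obtainable from the synchronization rule. The trigger only bounds the $\log\det$ drift of each agent's \emph{local} matrix relative to its frozen snapshot $\bm S_{i,t}$ by $D/\Delta t_i$, and $D$ is ultimately set to order $Td/\log(\cdot)$, so within a single epoch the global matrix $\bm V^\star_t$ can outgrow $\bm V_{i,t}$ by an enormous determinant factor (your own parenthetical, equating ``at most $e^{D}$ in the determinant'' with ``an additive $O(d\log(\cdot))$ in $\log\det$,'' conflates $D$ with $d\log(\cdot)$; they differ by a factor of order $T/\log^2$). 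Even replacing Loewner domination by the weaker determinant-ratio inequality $\lVert \bm x\rVert^2_{\bm B^{-1}} \le \frac{\det \bm A}{\det\bm B}\lVert \bm x\rVert^2_{\bm A^{-1}}$ (which is all one needs), the ratio is \emph{not} bounded by a constant in every epoch.

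What the paper does instead, and what your proposal is missing, is a pigeonhole decomposition over epochs: since the total $\log\det$ growth of the global Gram matrix over the horizon is at most $d\log(\rho_{\max}/\rho_{\min}+TL^2/(d\rho_{\min}))$, at most $R=\lceil d\log_\nu(\cdot)\rceil$ inter-synchronization epochs can have determinant ratio exceeding $\nu$. In the remaining ``good'' epochs the ratio is at most $\nu$ and your elliptic-potential argument goes through verbatim, yielding the $O(\bar\beta_T\sqrt{MTd\log(\cdot)})$ term. The ``bad'' epochs require a \emph{separate} argument: there the synchronization threshold gives $t_k\log_\nu(\det(\bm V_{i,t+t_k})/\det(\bm V_{i,t}))\le D$ for every agent, so each bad epoch contributes at most $2\nu\bar\beta_T M\sqrt{D}$ regret, for a total of $O(\bar\beta_T M\sqrt{D}\,d\log(\cdot))$; the value of $D$ is then chosen to balance the two contributions. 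Without this case split and the accompanying choice of $D$, your bound $\sum_{i,t}\min\{1,\lVert \bm x_{i,t}\rVert^2_{\bm V_{i,t}^{-1}}\}=O(d\log(\cdot))$ is unsupported. (A minor further slip: the elliptic potential is applied to a $d\times d$ matrix over $MT$ terms, not with ``$Md$ in place of $d$.'')
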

This regret bound is obtained by setting $D = 2Td\left(\log\left(\sfrac{\rho_{\max}}{\rho_{\min}} + \sfrac{TL^2}{d\rho_{\min}}\right) + 1\right)^{-1}$, which therefore ensures $O(M\log T)$ total communication (by Proposition~\ref{prop:communication_complexity}). However, the remarks next clarify a more sophisticated relationship between communication, privacy and regret:
\begin{remark}[Communication Complexity and Regret]
\label{remark:comm}
Theorem~\ref{thm:regret_cent} assumes that communication is essentially $O(M\log T)$. This rate (and corresponding $D$) is chosen to provide a balance between privacy and utility, and in fact, can be altered depending on the application. In the Appendix, we demonstrate that when we allow $O(MT)$ communication (i.e., synchronize every round), the regret can be improved by a factor of $O(\sqrt{\log(T)})$ (in both the gap-dependent and independent bounds), to match the single-agent $\widetilde{O}(\sqrt{dMT})$ rate. Similarly, following the reasoning in~\cite{wang2019distributed}, we can show that with $O(M^{1.5}d^3)$ communication (i.e., independent of $T$), we incur regret of $O(\sqrt{dMT}\log^2(MT))$. 
\end{remark}
Theorem~\ref{thm:regret_cent} demonstrates the relationship between communication complexity (i.e., number of synchronization rounds) and the regret bound for a fixed privacy budget, via the dependence on the bounds $\rho_{\min}, \rho_{\max}$ and $\kappa$. We now present similar results (for a fixed privacy budget) on group regret for the decentralized setting, which is more involved, as the delays and lack of a centralized controller make it difficult to control the heterogeneity of information between agents. Subsequently, in the next section, we will present results on the privacy guarantees of our algorithms.
\subsection{Decentralized Peer-to-Peer Environment}
In this environment, we assume that the collection of agents communicate by directly sending messages to each other. The communication network is denoted by an undirected (connected) graph $\mathcal G = (V, E)$, where, edge $e_{ij} \in E$ if agents $i$ and $j$ can communicate directly. The protocol operates as follows: every trial, each agent interacts with their respective bandit, and obtains a reward. At any trial $t$, after receiving the reward, each agent $v$ sends the message $\bm m_{v, t}$ to all its neighbors in $\mathcal G$. This message is forwarded from agent to agent $\gamma$ times (taking one trial of the bandit problem each between forwards), after which it is dropped. This communication protocol, based on the \textit{time-to-live} (delay) parameter $\gamma \leq \text{diam}(\mathcal G)$ is a common technique to control communication complexity, known as the \textsc{local} protocol~\cite{fraigniaud2016locality, linial1992locality, suomela2013survey}. Each agent $v \in V$ therefore also receives messages $\bm m_{v', t-d(v, v')}$ from all the agents $v'$ such that $d(v, v') \leq \gamma$, i.e., from all agents in $N_\gamma(v)$.

There are several differences in this setting compared to the previous one: first, since agents receive messages from different other agents based on their position in $\mathcal G$, they generally have heterogenous information throughout (no global sync). Second, information does not flow instantaneously through $\mathcal G$, and messages can take up to $\gamma$ rounds to be communicated (delay). This requires (mainly technical) changes to the centralized algorithm in order to control the regret. We present the pseudocode our algorithm \textsc{Decentralized FedUCB} in the Appendix, but highlight the major differences from the centralized version as follows.
The first key algorithmic change from the earlier variant is the idea of subsampling, inspired by the work of Weinberger and Ordentlich~\cite{weinberger2002delayed}. Each agent $i$ maintains $\gamma$ total estimators $\bar{\bm\theta}_{i, g}, g  = 1, ..., \gamma$, and uses each estimator in a round-robin fashion, i.e., at $t=1$ each agent uses $\bar{\bm\theta}_{i, 1}$, and at $t=2$, each agent uses $\bar{\bm\theta}_{i, 2}$, and so on. These estimators (and their associated $\bm V_{i, g}, \tilde{\bm u}_{i, g}$) are updated in a manner similar to Alg.~\ref{alg:centralized_feducb}: if the $\log\det$ of the $g^{th}$ Gram matrix exceeds a threshold $D/(\Delta_{i, g}+1)$, then the agent broadcasts a request to synchronize. Each agent $j$ within the $\gamma$-clique of $i$ that receives this request broadcasts its own $\bm V_{j, g}, \tilde{\bm u}_{j, g}$. Therefore, each $\bm V_{i, g}, \tilde{\bm u}_{i, g}$ is updated with action/reward pairs from \textit{only} the trials they were employed in (across all agents). This ensures that if a signal to synchronize the $g^{th}$ set of parameters has been broadcast by an agent $i$, all agents within the $\gamma$-clique of $i$ will have synchronized their $g^{th}$ parameters by the next 2 rounds they will be used again (i.e., $2\gamma$ trials later). We defer most of the intermediary results to the Appendix for brevity, and present the primary regret bound (in terms of privacy parameters $\rho_{\min}, \rho_{\max}$ and $\kappa$):
\begin{theorem}[Decentralized Private Group Regret]
Assuming Proposition~\ref{prop:beta_cent} holds, and synchronization occurs in at least $n = \Omega\left(d(\bar{\chi}(\mathcal G_\gamma)\cdot\gamma)(1+L^2)^{-1}\log\left(\sfrac{\rho_{\max}}{\rho_{\min}} + TL^2/d\rho_{\min}\right)\right)$ rounds, decentralized \textsc{FedUCB} obtains the following group pseudoregret with probability at least $1-\alpha$:
\begin{align*}
    \mathcal{R}_M(T)  &= O\left(\sigma\sqrt{M(\bar{\chi}(\mathcal G_\gamma)\cdot \gamma)Td}\left(\log\left(\frac{\rho_{\max}}{\rho_{\min}} + \frac{TL^2}{\gamma d\rho_{\min}}\right) +\sqrt{\log\frac{2}{\alpha}}+ MS\sqrt{\rho_{\max}} +M\kappa\right)\right).
\end{align*}
\label{thm:regret_decent}
\end{theorem}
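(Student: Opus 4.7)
The plan is to mirror the centralized proof of Theorem~\ref{thm:regret_cent}, with two new ingredients: (i) run the UCB analysis separately for each of the $\gamma$ round-robin subsamples, and (ii) aggregate over a clique cover of $\mathcal{G}_\gamma$ to control heterogeneity under delayed, event-triggered communication.

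First I would establish per-stream confidence sets. Applying Proposition~\ref{prop:beta_cent} to each of the $\gamma$ round-robin estimators separately and taking a union bound over streams and agents, one obtains that with probability at least $1-\alpha$, $\lVert \bar{\bm\theta}_{i,g,t} - \bm\theta^*\rVert_{\bm V_{i,g,t}} \leq \beta_{i,g,t}$ holds simultaneously for all $(i,g,t)$. The standard optimism-plus-triangle argument then yields the per-round regret bound $r_{i,t} \leq 2\beta_{i,g(t),t}\lVert \bm x_{i,t}\rVert_{\bm V_{i,g(t),t}^{-1}}$, where $g(t) = t \bmod \gamma$. Summing over agents and rounds and applying Cauchy--Schwarz gives
\begin{equation*}
    \mathcal{R}_M(T) \leq 2\beta_T\sqrt{MT \cdot \sum_{i=1}^M\sum_{t=1}^T \lVert \bm x_{i,t}\rVert_{\bm V_{i,g(t),t}^{-1}}^2},
\end{equation*}
where $\beta_T$ is the uniform upper envelope of $\beta_{i,g,t}$ taken from Proposition~\ref{prop:beta_cent}.

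The core technical step is bounding the inner sum. Fix a clique cover $\{C_1,\ldots,C_K\}$ of $\mathcal{G}_\gamma$ of size $K = \bar{\chi}(\mathcal{G}_\gamma)$: every pair of agents inside a given $C_k$ is within distance $\gamma$ in $\mathcal{G}$, so a synchronization broadcast reaches the whole clique in at most one LOCAL round under the round-robin schedule, i.e., within $2\gamma$ trials. Fixing a clique $C_k$ and a stream $g$, the event-triggered rule (log-determinant threshold $D/(\Delta_{i,g}+1)$) combined with this $\leq 2\gamma$-round staleness forces every agent's $\bm V_{i,g,t}$ to be spectrally within a constant factor of the clique-pooled matrix $\bm V^{(k,g)}_t := \sum_{j\in C_k}\sum_{\tau\leq t,\,g(\tau)=g} \bm x_{j,\tau}\bm x_{j,\tau}^\top + (\text{privacy regularizer})$. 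An elliptical-potential argument on $\bm V^{(k,g)}_t$ then gives
\begin{equation*}
    \sum_{i\in C_k}\sum_{t:\,g(t)=g} \lVert \bm x_{i,t}\rVert_{\bm V_{i,g,t}^{-1}}^2 = O\!\left(d\log\!\left(\tfrac{\rho_{\max}}{\rho_{\min}} + \tfrac{TL^2}{\gamma d \rho_{\min}}\right) + \gamma L^2\right),
\end{equation*}
where the $\gamma$ inside the log captures that each stream sees only $T/\gamma$ rounds, and the additive $\gamma L^2$ absorbs the stale rounds between synchronization request and fulfillment. Summing over the $K = \bar{\chi}(\mathcal{G}_\gamma)$ cliques and over $\gamma$ streams produces a factor $\bar{\chi}(\mathcal{G}_\gamma)\cdot\gamma$, and substituting into the Cauchy--Schwarz bound above (together with the Proposition~\ref{prop:beta_cent} expression for $\beta_T$) yields the stated regret. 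The choice $D = \Theta\!\bigl(Td/[\bar{\chi}(\mathcal{G}_\gamma)\gamma(1+L^2)\log(\cdot)]\bigr)$ is exactly what makes the decentralized analogue of Proposition~\ref{prop:communication_complexity} force at least $n$ synchronizations.

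The main obstacle will be the bookkeeping in the spectral step: one must verify that despite \emph{both} propagation delay (up to $\gamma$ rounds) \emph{and} event-triggered asynchrony (agents in a clique may not trigger synchronization simultaneously), each individual $\bm V_{i,g,t}$ remains within a constant multiplicative factor of the clique-pooled $\bm V^{(k,g)}_t$, so that the elliptical potential applied to the latter transfers to the former without loss. This is where the $(\Delta_{i,g}+1)$ in the denominator of the trigger is essential, since it scales the permitted drift linearly with elapsed time since the last sync and cancels the $\gamma$-round staleness in the log-det ratio; extracting this cleanly, and thereby justifying the $(1+L^2)^{-1}$ scaling in the $n$ requirement, is where the bulk of the careful argument lies.
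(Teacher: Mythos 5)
Your overall architecture matches the paper's: decompose over a clique cover of $\mathcal G_\gamma$ and over the $\gamma$ round-robin streams, apply the per-stream confidence sets from Proposition~\ref{prop:beta_cent}, and run the centralized elliptical-potential argument on each clique--stream pair with $T/\gamma$ effective rounds, picking up the $\bar{\chi}(\mathcal G_\gamma)\cdot\gamma$ factor on summation. However, the step you flag as ``the main obstacle'' contains a genuine gap as stated: it is \emph{not} true that the event-triggered rule keeps every individual $\bm V_{i,g,t}$ within a constant multiplicative factor of the clique-pooled Gram matrix for all $t$. The trigger only controls the log-determinant drift of an agent's local matrix relative to the \emph{last synchronized} pooled matrix $\bm S$; immediately after a synchronization the pooled matrix at the \emph{next} synchronization can still be spectrally much larger, because it aggregates the unshared growth of all $|C|$ agents. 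Since the ratio $\det(\bm V_k)/\det(\bm V_{k-1})$ between consecutive synchronized pooled matrices only telescopes to $d\log(\rho_{\max}/\rho_{\min}+TL^2/(\gamma d\rho_{\min}))$ over the whole horizon, the ``constant factor'' claim can fail, and the elliptical potential does not transfer ``without loss'' in those epochs.

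The paper's proof (following the centralized Theorem~\ref{thm:regret_cent}) repairs exactly this by splitting synchronization epochs into those where $1\le\det(\bm V_k)/\det(\bm V_{k-1})\le\nu$ (event $E$) --- all but $R=\lceil d\log_\nu(\cdot)\rceil$ of them, where your transfer argument is valid up to a factor $\nu$ --- and the at most $R$ remaining ``bad'' epochs, whose regret is bounded by a \emph{different} mechanism: the log-det threshold itself gives $t_k\sum_t\lVert\bm x_{i,t}\rVert^2_{(\bm V_{i,t}^{(j)})^{-1}}\le D$, hence at most $2\nu\bar\beta_T|C|\sqrt{D}$ regret per bad epoch, and $D$ is then tuned so that $R\cdot M\sqrt{D}$ matches the main $\sqrt{MTd\log(\cdot)}$ term. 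Your proposal uses $D$ only to control the number of synchronizations and tries to absorb all staleness into an additive $\gamma L^2$ term, which does not account for these $R$ epochs; without the good/bad split (or an equivalent device) the argument does not close. The remaining ingredients you list --- the $2\gamma$-trial synchronization latency, the $(1+L^2)$ rescaling of $D$ via the matrix-determinant lemma for the one-trial broadcast delay, and the final summation over $\bar{\chi}(\mathcal G_\gamma)\cdot\gamma$ clique--stream pairs --- agree with the paper.
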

\begin{remark}[Decentralized Group Regret]
Decentralized \textsc{FedUCB} obtains an identical dependence on the privacy bounds $\rho_{\min}, \rho_{\max}$ and $\kappa$ and horizon $T$ as Algorithm~\ref{alg:centralized_feducb}, since the underlying bandit subroutines are identical for both. The key difference is in additional the leading factor of $\sqrt{\bar{\chi}(\mathcal G_\gamma) \cdot \gamma}$, which arises from the delayed spread of information: if $\mathcal G$ is dense, e.g., complete, then $\gamma = 1$ and $\bar{\chi}(\mathcal G_\gamma) = 1$, since there is only one clique of $\mathcal G$. In the worst case, if $\mathcal G$ is a line graph, then $\bar{\chi}(\mathcal G_\gamma) = M/\gamma$, giving an additional factor of $M$ (i.e., it is as good as each agent acting individually). In more practical scenarios, we expect $\mathcal G$ to be hierarchical, and expect a delay overhead of $o(1)$. Note that since each agent only communicates within its clique, the factor $\bar{\chi}(\mathcal G_\gamma)$ is optimal, see~\cite{dubey2020cooperative}.
\end{remark}
\subsection{Privacy Guarantees}
We now discuss the privacy guarantees for both algorithms. Here we present results for the centralized algorithm, but our results hold (almost identically) for the decentralized case as well (see appendix). Note that each agent interacts with data from other agents only via the cumulative parameters $\bm S_{t}$ and $\bm s_t$. These, in turn, depend on $\bm Z_{i, t} = \bm U_{i, t} + \bm H_{i, t}$ and $\bm z_{i, t} = \bar{\bm u}_{i, t} + \bm h_{i, t}$ for each agent $i$, on the instances $t$ that synchronization occurs.
\begin{proposition}[see~\cite{dwork2011differential, shariff2018differentially}] 
Consider $n \leq T$ synchronization rounds occuring on trials $\bar{\bm\sigma} \subseteq [T]$. If the sequence $\left(\bm Z_{i, t}, \bm z_{i, t} \right)_{t \in \bar{\bm \sigma}}$ is $(\varepsilon, \delta)$-differentially private with respect to $(\bm x_{i, t}, y_{i, t})_{t \in [T]}$, for each agent $i \in [M]$, then all agents are  $(\varepsilon, \delta, M)$-federated differentially private. 
\end{proposition}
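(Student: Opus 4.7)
The plan is to reduce the claim to the post-processing property of differential privacy by tracing how agent $i$'s raw data can flow into another agent $j$'s behavior. First, I would fix any ordered pair of distinct agents $i, j \in [M]$ and any $t$-neighboring tuples $\bm S_i = (S_1, \dots, S_i, \dots, S_M)$ and $\bm S'_i = (S_1, \dots, S'_i, \dots, S_M)$ that differ only in a single $(\bm x_{i, t}, y_{i, t})$ entry belonging to agent $i$. Inspecting Algorithm~\ref{alg:centralized_feducb}, the only channel along which agent $j$ is exposed to agent $i$'s raw data is the synchronization step: agent $i$ emits the privatized pair $(\bm Z_{i, t}, \bm z_{i, t}) = (\widehat{\bm U}_{i, t+1}, \widehat{\bm u}_{i, t+1})$ at each $t \in \bar{\bm\sigma}$, the controller returns only the aggregates $\bm S_{t+1} = \sum_{k \in [M]} \widehat{\bm U}_{k, t+1}$ and $\bm s_{t+1} = \sum_{k \in [M]} \widehat{\bm u}_{k, t+1}$, and all other inputs to agent $j$ (its own $\mathcal D_{j, t}$, rewards $y_{j, t}$, and any private random coins) are held identical across $\bm S_i$ and $\bm S'_i$.

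Next, I would express the output distribution of $A_j$ as a deterministic map (conditional on the randomness of the agents $k \neq i$ and the controller, which is independent of agent $i$'s data) applied to the sequence $(\bm Z_{i, t}, \bm z_{i, t})_{t \in \bar{\bm\sigma}}$ together with fixed side information. By hypothesis this sequence is $(\varepsilon, \delta)$-differentially private with respect to $(\bm x_{i, t}, y_{i, t})_{t \in [T]}$, and a $t$-neighboring change in $S_i$ is a single-coordinate change of this private input. The standard post-processing closure of $(\varepsilon, \delta)$-DP then immediately gives, for every measurable $\mathcal S_j \subset \mathcal D_{j,1} \times \dots \times \mathcal D_{j,T}$,
\[
\mathbb P\left(A_j(\bm S_i) \in \mathcal S_j\right) \leq e^{\varepsilon}\,\mathbb P\left(A_j(\bm S'_i) \in \mathcal S_j\right) + \delta,
\]
which is exactly the inequality required by Definition~1. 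Since $i \neq j$ were arbitrary, this establishes $(\varepsilon, \delta, M)$-federated differential privacy.

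The main obstacle is not technical but conceptual: one must carefully audit the protocol to justify the information-flow claim that $A_j$'s distribution depends on agent $i$'s raw history only through $(\bm Z_{i, t}, \bm z_{i, t})_{t \in \bar{\bm\sigma}}$. In particular, no side channel may exist outside of the \textsc{Privatizer} subroutine, no un-privatized statistic of $S_i$ may be communicated, and the trigger for synchronization must itself be expressible as a function of already-released quantities. For the decentralized variant the identical argument applies after replacing the controller's aggregation with the $\gamma$-hop forwarding protocol, which is again a deterministic post-processing of the same privatized messages and hence inherits the $(\varepsilon, \delta)$ guarantee without modification.
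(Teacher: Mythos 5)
Your proposal is correct and takes essentially the same route as the paper, which offers no standalone proof of this proposition and instead defers to the post-processing/billboard-lemma argument of the cited works: agent $j$ touches agent $i$'s history only through the privatized aggregates $\bm S_t, \bm s_t$, which are deterministic functions of $(\bm Z_{i,t}, \bm z_{i,t})_{t\in\bar{\bm\sigma}}$, so closure of $(\varepsilon,\delta)$-DP under post-processing yields the definition's inequality. Your closing caveat about the synchronization trigger being a function of un-privatized local statistics is in fact a sharper observation than anything the paper makes explicit (the proposition sidesteps it by conditioning on a fixed $\bar{\bm\sigma}$), and is worth retaining.
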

\begin{algorithm}[t!]
\caption{\textsc{Privatizer}$(\varepsilon, \delta, M, T)$ for any agent $i$}
\label{alg:privatizer}
\small
\begin{algorithmic}[1] 
\STATE \underline{\textbf{Initialization:}}
\STATE If communication rounds $n$ are fixed \textit{a priori}, set $m \leftarrow 1 + \lceil \log n \rceil$, else $m \leftarrow 1+\  \lceil \log T \rceil$.
\STATE Create binary tree $\mathcal T$ of depth $m$.
\FOR{node $n$ in $\mathcal T$}
\STATE Sample noise $\widehat{\bm N}\in \mathbb R^{(d+1)\times(d+1)}$, where $\widehat{\bm N}_{ij} \sim \mathcal N\left(0, 16m(L^2 + 1)^2\log(2/\delta)^2/\varepsilon^2\right)$.
\STATE Store $\bm N = (\widehat{\bm N} + \widehat{\bm N}^\top)/\sqrt{2}$ at node $n$.\label{line:symmetry}
\ENDFOR
\STATE{\underline{\textbf{Runtime:}}}
\FOR{each communication round $t \leq n$}
\STATE Receive $\widehat{\bm Q}_{i, t}$ from agent, and insert it into $\mathcal T$ (see~\cite{jain2012differentially}, Alg. 5).
\STATE Compute $\bm M_{i, t+1}$ using the least nodes of $\mathcal T$ (see~\cite{jain2012differentially}, Alg. 5).
\STATE Set $\widehat{\bm U}_{i, t+1} = \bm U_{i, t+1} + \bm H_{i, t}$ as top-left $d \times d$ submatrix of $\bm M_{i, t+1}$.
\STATE Set $\widehat{\bm u}_{i, t+1} = \bm u_{i, t+1} + \bm h_{i, t}$ as first $d$ entries of last column of $\bm M_{i, t+1}$.
\STATE Return $\widehat{\bm U}_{i, t+1}$, $\widehat{\bm u}_{i, t+1}$ to agent.
\ENDFOR
\end{algorithmic}
\end{algorithm}
\textbf{Tree-Based Mechanism}. Let $x_1, x_2, ... x_T$ be a (matrix-valued) sequence of length $T$, and $s_i = \sum_{t=1}^i x_t$ be the incremental sum of the sequence that must be released privately. The tree-based mechanism~\cite{dwork2010differential} for differential privacy involves a trusted entity maintaining a binary tree $\mathcal T$ (of depth $m = 1 + \ceil{\log_2 T}$), where the leaf nodes contain the sequence items $x_i$, and each parent node maintains the (matrix) sum of its children. Let $n_i$ be the value stored at any node in the tree. The mechanism achieves privacy by adding a noise $h_i$ to each node, and releasing $n_i + h_i$ whenever a node is queried. Now, to calculate $s_t$ for some $t \in [T]$, the procedure is to traverse the tree $\mathcal T$ up to the leaf node corresponding to $x_t$, and summing up the values at each node on the traversal path. The advantage is that we only access at most $m$ nodes, and add $m = O(\log T)$ noise (instead of $O(T)$).

The implementation of the private release of $(\bm U_{i, t}, \bar{\bm u}_{i, t})$ is done by the ubiquitous tree-based mechanism for partial sums. Following~\cite{shariff2018differentially}, we also aggregate both into a single matrix $\bm M_{i, t} \in \mathbb R^{(d+1)\times(d+1)}$, by first concatenating: $\bm A_{i, t} := \left[ \bm X_{i, 1:t}, \bm y_{i, 1:t}\right] \in \mathbb R^{t\times (d+1)}$, and then computing $\bm M_{i, t} = \bm A_{i, t}^\top \bm A_{i, t}$. Furthermore, the update is straightforward: $\bm M_{i, t+1} = \bm M_{i, t} + [\bm x_{i, t}^\top \ y_{i, t}]^\top [\bm x_{i, t}^\top \ y_{i, t}]$. Recall that in our implementation, we only communicate in synchronization rounds (and not every round). Assume that two successive rounds of synchronization occur at time $t'$ and $t$. Then, at instant $t$, each agent $i$ inserts $\sum_{\tau=t'}^{t} [\bm x_{i, \tau}^\top \ y_{i, \tau}]^\top [\bm x_{i, \tau}^\top \ y_{i, \tau}]$ into $\mathcal T$, and computes $(\bm U_{i, t}, \bar{\bm u}_{i, t})$ by summing up the entire path up to instant $t$ via the tree mechanism. Therefore, the tree mechanism accesses at most $m = 1 +\lceil \log_2 n\rceil$ nodes (where $n$ total rounds of communication occur until instant $T$), and hence noise that ensures each node guarantees $(\varepsilon/\sqrt{8m\ln(2/\delta)}, \delta/2m)$-privacy is sufficient to make the outgoing sequence $(\varepsilon, \delta)$-private. This is different from the setting in the joint DP single-agent bandit~\cite{shariff2018differentially}, where observations are inserted \textit{every} round, and not only for synchronization rounds.

To make the partial sums private, a noise matrix is also added to each node in $\mathcal T$.
We utilize additive Gaussian noise: at each node, we sample $\widehat{\bm N} \in \mathbb R^{(d+1)\times(d+1)}$, where each $\widehat{\bm N}_{i, j}\sim \mathcal N(0, \sigma_N^2)$, and $\sigma^2_N = 16m(L^2 + 1)^2\log(2/\delta)^2/\varepsilon^2$, and symmetrize it (see step~\ref{line:symmetry} of Alg.~\ref{alg:privatizer}). The total noise $\bm H_{i, t}$ is the sum of at most $m$ such terms, hence the variance of each element in $\bm H_{i, t}$ is $\leq m\sigma_N^2$. We can bound the operator norm of the top-left $(d\times d)$-submatrix of each noise term. Therefore, to guarantee $(\varepsilon, \delta, M)$-federated DP, we require that with probability at least $1-\alpha/nM$:
\begin{align*}
    \lVert \bm H_{i, t} \rVert_{\text{op}} \leq \Lambda =  \sqrt{32}m(L^2+1)\log(4/\delta)(4\sqrt{d} + 2\ln(2nM/\alpha))/\varepsilon.
\end{align*}
\begin{remark}[Privacy Guarantee]
The procedure outlined above guarantees that each of the $n$ outgoing messages $(\bm U_{i, t}, \bar{\bm u}_{i, t})$ (where $t$ is a synchronization round) for any agent $i$ is $(\varepsilon, \delta)$-differentially private. This analysis considers the $L_2$-sensitivity with respect to a single differing observation, i.e., $(\bm x, y)$ and not the entire message itself, i.e., the complete sequence $(\bm x_{i, \tau}, y_{i, \tau})_{\tau=t'}^t$ (where $t'$ and $t$ are successive synchronization rounds), which may potentially have $O(t)$ sensitivity and warrants a detailed analysis. While our analysis is sufficient for the user-level adversary model, there may be settings where privacy is required at the message-level as well, which we leave as future work.
\end{remark}
However, as noted by~\cite{shariff2018differentially}, this $\bm H_{i, t}$ would not always be PSD. To ensure that it is always PSD, we can shift each $\bm H_{i, t}$ by $2\Lambda \bm I$, giving a bound on $\rho_{\max}$. Similarly, we can obtain bounds on $\rho_{\min}$ and $\kappa$:
\begin{proposition}
Fix $\alpha > 0$. If each agent $i$ samples noise parameters $\bm H_{i, t}$ and $\bm h_{i, t}$ using the tree-based Gaussian mechanism mentioned above for all $n$ trials of $\bar{\bm\sigma}$ in which communication occurs, then the following $\rho_{\min}, \rho_{\max}$ and $\kappa$ are $(\alpha/2nM, \bar{\bm\sigma})$-accurate bounds:
\begin{align*}
    \rho_{\min} = \Lambda,\ \rho_{\max} = 3\Lambda,\ \kappa \leq \sqrt{m(L^2+1)\left(\sqrt{d}+2\log(\sfrac{2nM}{\alpha})\right)/(\sqrt{2}\varepsilon)}.
\end{align*}
\end{proposition}
\begin{remark}[Strategyproof Analysis]
The mechanism presented above assumes the worst-case communication, i.e., synchronization occurs every round, therefore at most $T$ partial sums will be released, and $m = 1 + \lceil\log T\rceil$. This is not true for general settings, where infrequent communication would typically require $m = O(\log\log T)$. However, if any agent is byzantine and requests a synchronization every trial, $m$ must be $1+\lceil\log T\rceil$ to ensure privacy. In case the protocol is fixed in advance (i.e., synchronization occurs on a pre-determined set $\bar{\bm\sigma}$ of $n$ rounds), then we can set $m = 1 + \lceil\log n\rceil$ to achieve the maximum utility at the desired privacy budget.
\end{remark}
\begin{remark}[Decentralized Protocol]
Decentralized \textsc{FedUCB} obtains similar bounds for $\rho_{\min}, \rho_{\max}$ and $\kappa$, with $m = 1 +\lceil \log(T/\gamma)\rceil$. An additional term of $\log(\gamma)$ appears in $\Lambda$ and $\kappa$, since we need to now maintain $\gamma$ partial sums with at most $T/\gamma$ elements in the worst-case (see Appendix). Unsurprisingly, there is no dependence on the network $\mathcal G$, as privatization is done at the source itself.
\end{remark}
\begin{corollary}[$(\varepsilon, \delta)$-dependent Regret]\textsc{FedUCB} with the \textsc{Privatizer} subroutine in Alg.~\ref{alg:privatizer}, obtains $\widetilde{O}(d^{3/4}\sqrt{MT/\varepsilon})$ centralized regret and $\widetilde{O}(d^{3/4}\sqrt{(\bar{\chi}(\mathcal G_\gamma)\cdot\gamma)MT/\varepsilon})$ decentralized regret.
\end{corollary}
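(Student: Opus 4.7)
The corollary is essentially a plug-and-chug: substitute the $(\rho_{\min},\rho_{\max},\kappa)$ bounds from the preceding Proposition into the regret expressions of Theorems~\ref{thm:regret_cent} and~\ref{thm:regret_decent}, and then identify the dominant terms. The plan is as follows.

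First, I would track the asymptotic order of $\Lambda$. By Proposition~\ref{prop:communication_complexity} with the choice of $D$ selected in Theorem~\ref{thm:regret_cent}, the number of synchronization rounds $n$ is $O(d\log T)$, so $m = 1+\lceil\log n\rceil = O(\log(d\log T))$, which is polylogarithmic in $T$. Hiding all polylog factors in $T,1/\delta,1/\alpha$ (and the constants $L,S,\sigma,M$) inside $\widetilde O(\cdot)$, this gives $\Lambda = \widetilde O(\sqrt{d}/\varepsilon)$. Consequently $\rho_{\min} = \widetilde O(\sqrt d/\varepsilon)$, $\sqrt{\rho_{\max}} = \widetilde O(d^{1/4}/\sqrt\varepsilon)$, and from the bound on $\kappa$ we get $\kappa = \widetilde O\bigl(\sqrt{m(\sqrt d + \log(nM/\alpha))/\varepsilon}\bigr) = \widetilde O(d^{1/4}/\sqrt\varepsilon)$.

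Next, I would examine the bracket in Theorem~\ref{thm:regret_cent}: $\log(\rho_{\max}/\rho_{\min} + TL^2/(d\rho_{\min})) = \log(3 + TL^2\varepsilon/(d^{3/2}))$ is only $O(\log(T/\varepsilon))$; the confidence radius $\sqrt{\log(2/\alpha)}$ is $\widetilde O(1)$; and the remaining two privacy-dependent terms $MS\sqrt{\rho_{\max}}$ and $M\kappa$ are each $\widetilde O(d^{1/4}/\sqrt\varepsilon)$. These privacy terms dominate, so the bracket is $\widetilde O(d^{1/4}/\sqrt\varepsilon)$. Multiplying by the prefactor $\sigma\sqrt{MTd}$ yields
\begin{equation*}
    \mathcal R_M(T) \;=\; \widetilde O\bigl(\sqrt{MTd}\cdot d^{1/4}/\sqrt\varepsilon\bigr) \;=\; \widetilde O\bigl(d^{3/4}\sqrt{MT/\varepsilon}\bigr),
\end{equation*}
which is the centralized claim.

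For the decentralized bound, the argument is identical, except that by the Remark after Theorem~\ref{thm:regret_decent} the \textsc{Privatizer} uses $m = 1+\lceil\log(T/\gamma)\rceil$ partial sums with an extra $\log\gamma$ inflation in both $\Lambda$ and $\kappa$; since $\gamma\le\text{diam}(\mathcal G)\le M$, these additional factors are absorbed into $\widetilde O(\cdot)$, so $\sqrt{\rho_{\max}}$ and $\kappa$ remain $\widetilde O(d^{1/4}/\sqrt\varepsilon)$. The prefactor in Theorem~\ref{thm:regret_decent} now carries the extra clique-cover factor $\sqrt{\bar\chi(\mathcal G_\gamma)\cdot\gamma}$, producing the stated decentralized rate $\widetilde O(d^{3/4}\sqrt{(\bar\chi(\mathcal G_\gamma)\cdot\gamma)MT/\varepsilon})$.

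The one step that requires some care (and is the only genuine obstacle) is verifying that the lower-bound requirement on $n$ stated in Theorem~\ref{thm:regret_cent} is consistent with the $n = O(d\log T)$ used to compute $m$; this amounts to checking that the chosen $D$ yields $n$ lying in the stated window. Once this consistency check is in place, the rest reduces to bookkeeping on log factors, all of which are swept into $\widetilde O(\cdot)$.
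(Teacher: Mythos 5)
Your proposal is correct and follows the same route as the paper: the corollary has no separate proof there, being obtained exactly as you do by substituting $\rho_{\min}=\Lambda$, $\rho_{\max}=3\Lambda$ and $\kappa=\widetilde O(d^{1/4}/\sqrt{\varepsilon})$ from the \textsc{Privatizer} accuracy proposition into Theorems~\ref{thm:regret_cent} and~\ref{thm:regret_decent} and observing that the $MS\sqrt{\rho_{\max}}$ and $M\kappa$ terms dominate the bracket. The consistency check you flag is in fact moot, since Algorithm~\ref{alg:privatizer} falls back to $m=1+\lceil\log T\rceil$ when $n$ is not fixed a priori (still polylogarithmic, so $\Lambda=\widetilde O(\sqrt d/\varepsilon)$ either way), and your absorption of the extra factor of $M$ into $\widetilde O(\cdot)$ mirrors the paper's own bookkeeping convention.
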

\begin{figure*}[t!]
  \includegraphics[width=\linewidth]{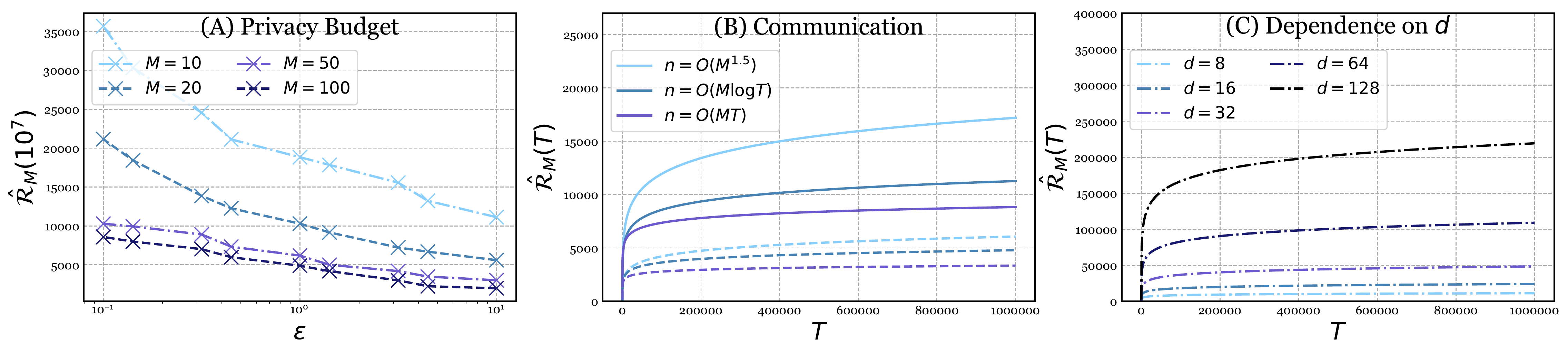}
  \caption{A comparison of centralized \textsc{FedUCB} on 3 different axes. Fig. (A) describes the variation in asymptotic per-agent regret for varying privacy budget $\varepsilon$ (where $\delta=0.1$); (B) describes the effect of $n$ in private (solid) vs. non-private (dashed) settings; (C) describes the effect of $d$ in per-agent regret in the private setting ($n=O(M\log T), \varepsilon = 1, \delta = 0.1$). Experiments averaged over 100 runs.}
  \label{fig:main_experiments}
\end{figure*}
\subsection{Experiments}
We provide an abridged summary of our experiments (ref. Appendix for all experiments). Here, we focus on the centralized environment, and on the variation of the regret with communication complexity and privacy budget. For all experiments, we assume $L=S=1$. For any $d$, we randomly fix $\bm\theta^* \in \mathcal B_d(1)$. Each $\mathcal D_{i, t}$ is generated as follows: we randomly sample $K \leq d^2$ actions $\bm x$, such that for $K-1$ actions $0.5 \leq \langle \bm x, \bm\theta^*\rangle \leq 0.6$ and for the optimal $\bm x^*, 0.7 \leq \langle \bm x^*, \bm\theta^*\rangle \leq 0.8$ such that $\Delta \geq 0.1$ always. $y_{i, t}$ is sampled from $\text{Beta}(\langle \bm x_{i, t}, \bm\theta^*\rangle, 1-\langle \bm x_{i, t}, \bm\theta^*\rangle)$ such that $\mathbb E[y_{i, t}] = \langle \bm x_{i, t}, \bm\theta^*\rangle$ and $|y_{i, t}| \leq 1$. Results are in Fig.~\ref{fig:main_experiments}, and experiments are averaged on 100 trials.

\textbf{Experiment 1: Privacy Budget}. In this setting, we set $n  = O(M\log T), d=10$ (to balance communication and performance), and plot the average per-agent regret after $T=10^{7}$ trials for varying $M$ and $\varepsilon$, while keeping $\delta = 0.1$. Figure~\ref{fig:main_experiments}A describes the results, competitive even at large privacy budget.

\textbf{Experiment 2: Communication}. To test the communication-privacy trade-off, we plot the regret curves for $M=100$ agents on $n=O(M^{1.5}), O(M\log T), O(MT)$ communication for both private $(\varepsilon = 1)$ and non-private settings. We observe a tradeoff as highlighted in Remark~\ref{remark:comm}.

\textbf{Experiment 3: Dependence on $d$}. Finally, as an ablation, we provide the average per-agent regret curves for $M=100$ by varying the dimensionality $d$. We observe essentially a quadratic dependence.

\section{Discussion and Conclusion}
The relentless permeance of intelligent decision-making on sensitive user data necessitates the development of technically sound and robust machine learning systems; it is difficult to stress enough the importance of protecting user data in modern times. There is a significant effort~\cite{calzada2020data, hardjono2019data} in creating decentralized decision-making systems that benefit from pooling data from multiple sources while maintaining user-level privacy. This research provides an instance of such a system, along with guarantees on both its utility and privacy. From a technical perspective, we make improvements along several fronts -- while there has been prior work on multi-agent private linear bandits~\cite{hannun2019privacy, malekzadeh2019privacy} our work is the first to provide rigorous guarantees on private linear bandits in the multi-agent setting. Our work additionally provides the first algorithm with regret guarantees for contextual bandits in decentralized networks, extending the work of many on multi-armed bandits~\cite{martinez2018decentralized, landgren2016distributed, landgren2016distributed2, dubey2020private, dubey2020cooperative}.

There are several unresolved questions in this line of work, as highlighted by our algorithm itself. In the decentralized case, our algorithm obtains a communication overhead of $O(\sqrt{\bar{\chi}(\mathcal G_\gamma)\cdot \gamma})$, which we comment is an artefact of our proof technique. We conjecture that the true dependence scales as $O(\sqrt{\alpha(\mathcal G_\gamma)+\gamma})$ (where $\alpha(\cdot)$ is the independence number of the graph), that would be obtained by an algorithm that bypasses subsampling, with a more sophisticated synchronization technique. Additionally, the lower bound obtained for private contextual bandit estimation~\cite{shariff2018differentially} has an $O(\varepsilon^{-1})$ dependence on the privacy budget. In the federated setting, given the intricate relationship between communication and privacy, an interesting line of inquiry would be to understand the communication-privacy tradeoff in more detail, with appropriate lower bounds for fixed communication budgets.
\section*{Broader Impact}
We envision a world that runs on data-dependent inference. Given the ease of availability of sensitive data, large-scale centralized data sources are not feasible to uphold the best interests and safety of the average user, and decentralized inference mechanisms are a feasible alternative balancing utility with important normative values as privacy, transparency and accountability. This research is a part of such a broader research goal, to create decentralized systems that maintain the security of the end-user in mind. We believe that methods like ours are a starting point for better private algorithms that can be readily deployed in industry, with appropriate guarantees as well. While there are several subsequent steps that can be improved both theoretically and practically, the design for this research from the ground-up has been to ensure robustness and security (sometimes) in lieu of efficiency.
\section*{Funding Disclosures}
This project was funded by the MIT Trust:Data Consortium.

\newpage
\appendix
\section{Full Proofs}
\begin{proposition}[Communication Complexity, Proposition~\ref{prop:communication_complexity} of main paper]
\label{prop:communication_complexity_app}
If Algorithm~\ref{alg:centralized_feducb} is run with threshold $D$, then total rounds of communication $n$ is upper bounded by $ 2\sqrt{(\sfrac{dT}{D})\cdot \log\left(\sfrac{\rho_{\max}}{\rho_{\min}} + \sfrac{TL^2}{d\rho_{\min}}\right)}+4$.
\end{proposition}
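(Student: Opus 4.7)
The plan is to adapt the classical log-determinant potential argument of Abbasi-Yadkori et al.~\cite{abbasi2011improved} (in the style used for ``rarely switching'' LinUCB) to our federated, noise-corrupted setting, and then close the bound via Cauchy-Schwarz. Denote the synchronization rounds as $\tau_1 < \tau_2 < \ldots < \tau_n$, with $\tau_0 := 0$ and $\tau_{n+1} := T$, and set $\Delta_k := \tau_{k+1} - \tau_k$. Since synchronization is a global event, $\Delta t_i = \Delta_k$ for every agent $i$ at sync time $\tau_{k+1}$, and we let $i_k$ denote the agent whose trigger condition in line~12 fires. The trigger condition directly gives
\[
\log\det\!\bigl(\bm V_{i_k,\tau_{k+1}} + \bm x_{i_k,\tau_{k+1}}\bm x_{i_k,\tau_{k+1}}^\top + M(\rho_{\max}-\rho_{\min})\bm I\bigr) - \log\det(\bm S_{i_k,\tau_{k+1}}) \;\geq\; D/\Delta_k.
\]

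Next, I would introduce a deterministic, monotone surrogate matrix
$\widetilde{\bm V}_t := \sum_{j=1}^{M} \bm G_{j,t} + 2M\rho_{\max}\bm I$, where $\bm G_{j,t} = \sum_{\tau\leq t}\bm x_{j,\tau}\bm x_{j,\tau}^\top$. Conditioned on the sparse accuracy event ($\rho_{\min}\bm I \preceq \bm H_{j,\tau} \preceq \rho_{\max}\bm I$), a direct PSD calculation gives the two key inclusions: $\bm S_{\tau_k+1} \preceq \widetilde{\bm V}_{\tau_k}$ (because $\sum_j \bm H_{j,\tau_k} \preceq M\rho_{\max}\bm I$), and, crucially, the triggering matrix satisfies $\bm V_{i_k,\tau_{k+1}} + \bm x_{i_k,\tau_{k+1}}\bm x_{i_k,\tau_{k+1}}^\top + M(\rho_{\max}-\rho_{\min})\bm I \preceq \widetilde{\bm V}_{\tau_{k+1}}$. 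The role of the $M(\rho_{\max}-\rho_{\min})\bm I$ padding inside the trigger becomes visible here: it precisely absorbs the worst-case discrepancy between the noisy sync matrix $\bm S_{\tau_k+1}$ and the monotone proxy $\widetilde{\bm V}_{\tau_k}$, so that the trigger inequality translates into a telescoping increment
\[
\log\det(\widetilde{\bm V}_{\tau_{k+1}}) - \log\det(\widetilde{\bm V}_{\tau_k}) \;\geq\; D/\Delta_k,
\]
after rolling the residual $O(d\log(\rho_{\max}/\rho_{\min}))$ factor into the global log term.

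Summing over $k=1,\ldots,n$, the left-hand side telescopes, and using the determinant-trace inequality with $\operatorname{tr}(\widetilde{\bm V}_T) \leq MTL^2 + 2Md\rho_{\max}$ together with $\widetilde{\bm V}_0 \succeq 2M\rho_{\max}\bm I$, I obtain
\[
\sum_{k=1}^{n} \frac{D}{\Delta_k} \;\leq\; \log\det(\widetilde{\bm V}_T) - \log\det(\widetilde{\bm V}_0) \;\leq\; d\log\!\left(\frac{\rho_{\max}}{\rho_{\min}} + \frac{TL^2}{d\rho_{\min}}\right).
\]
Finally, Cauchy-Schwarz on the epoch lengths yields $n^2 = \bigl(\sum_k 1\bigr)^2 \leq \bigl(\sum_k \Delta_k\bigr)\bigl(\sum_k \Delta_k^{-1}\bigr) \leq T \cdot \tfrac{d}{D}\log(\rho_{\max}/\rho_{\min} + TL^2/(d\rho_{\min}))$, so that $n \leq \sqrt{(dT/D)\log(\rho_{\max}/\rho_{\min}+TL^2/(d\rho_{\min}))}$. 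The additive ``$+4$'' slack accounts for three boundary artifacts: the first synchronization (whose trigger is degenerate because $\Delta t_i=0$ initially), the possibly partial last epoch $(\tau_n, T]$, and rounding in the Cauchy-Schwarz step (which strictly needs a factor of $2$ to cover the leading constants).

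The main obstacle is Step 2, namely establishing the surrogate telescoping inequality cleanly despite the fact that the noise $\sum_j \bm H_{j,\tau}$ is resampled at each synchronization and can fluctuate between $M\rho_{\min}\bm I$ and $M\rho_{\max}\bm I$ across epochs. Naively, one picks up a per-epoch slack of $d\log(1 + 2\rho_{\max}/\rho_{\min})$, which after summing would propagate an $nd\log(\rho_{\max}/\rho_{\min})$ term into the Cauchy-Schwarz step and degrade the bound from $\sqrt{T}$ to linear in $T$. Avoiding this requires choosing the surrogate padding constant $C$ large enough (here $C=2M\rho_{\max}$) so that $\widetilde{\bm V}_{\tau_{k+1}}$ dominates the triggering matrix after the $M(\rho_{\max}-\rho_{\min})\bm I$ adjustment, simultaneously with $\widetilde{\bm V}_{\tau_k}$ dominating the noisy $\bm S_{\tau_k+1}$ -- and then verifying that the resulting slack only contributes to the global log factor $\log(\rho_{\max}/\rho_{\min} + TL^2/(d\rho_{\min}))$ rather than multiplying the summation count $n$.
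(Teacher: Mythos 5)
Your overall architecture is the same log-determinant potential argument the paper uses, and your closing step is a genuinely different (and cleaner) way to finish: the paper picks a threshold $n'$, bounds the number of ``long'' epochs ($\geq n'$) by $T/n'$ and the number of ``short'' epochs by $(n'/D)\cdot d\log(\cdot)$ via the telescoped potential, and then balances $n'$; you instead apply Cauchy--Schwarz to $n=\sum_k\sqrt{\Delta_k}\cdot\sqrt{1/\Delta_k}$ using $\sum_k\Delta_k\leq T$ and $\sum_k\Delta_k^{-1}\leq (d/D)\log(\cdot)$. Your version avoids the threshold optimization entirely and even saves the paper's leading factor of $2$. That part is correct, provided the potential inequality $\sum_k D/\Delta_k\leq d\log(\cdot)$ actually holds.

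That is where the gap is, and it sits exactly at the point you yourself flag as the main obstacle. To convert the trigger inequality $\log\det(\mathrm{trig}_k)-\log\det(\bm S_{\tau_k+1})\geq D/\Delta_k$ into $\log\det(\widetilde{\bm V}_{\tau_{k+1}})-\log\det(\widetilde{\bm V}_{\tau_k})\geq D/\Delta_k$, you need \emph{both} $\widetilde{\bm V}_{\tau_{k+1}}\succcurlyeq \mathrm{trig}_k$ \emph{and} $\widetilde{\bm V}_{\tau_k}\preccurlyeq \bm S_{\tau_k+1}$. Your stated inclusion $\bm S_{\tau_k+1}\preccurlyeq\widetilde{\bm V}_{\tau_k}$ points the wrong way: it makes the subtracted term larger, not smaller. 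Moreover, no single additive padding $C\bm I$ can satisfy both requirements: the first forces $C\geq M(2\rho_{\max}-\rho_{\min})$ (to absorb $\sum_j\bm H_{j,\tau_k}+M(\rho_{\max}-\rho_{\min})\bm I$ in the worst case), while the second forces $C\bm I\preccurlyeq\sum_j\bm H_{j,\tau_k}$, i.e.\ $C\leq M\rho_{\min}$; these are incompatible whenever $\rho_{\max}>\rho_{\min}$. So the ``naive'' per-epoch slack of order $d\log(\rho_{\max}/\rho_{\min})$ that you correctly identify is not eliminated by the choice $C=2M\rho_{\max}$, and it cannot be rolled into the global log term: summed over epochs it adds $n\,d\log(O(\rho_{\max}/\rho_{\min}))/D$ to $\sum_k\Delta_k^{-1}$, and the Cauchy--Schwarz step then yields $n\lesssim\sqrt{dT\log(\cdot)/D}+dT\log(\rho_{\max}/\rho_{\min})/D$ rather than the claimed bound for arbitrary $D$. (With the specific choice $D=\Theta(Td/\log(\cdot))$ used in Theorem~\ref{thm:regret_cent} and the Privatizer's $\rho_{\max}/\rho_{\min}=3$, the extra term is of the same order as the main one, so the downstream regret bound survives; the proposition as stated does not follow from your argument, however.) To be fair, the paper's own proof silently identifies the trigger quantity with the ratio of consecutive synchronized matrices and telescopes it without addressing the noise resampling at all, so you have at least made explicit a subtlety that the paper glosses over — but your proposed resolution does not close it.
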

\begin{proof}
We denote the number of (common) bandit trials between two rounds of communication as an epoch. Let $n' = \sqrt{\frac{DT}{d\log(\rho_{\max}/\rho_{\min} + TL^2/(d\rho_{\min}))}+1}$. There can be at most $\lceil T/n' \rceil$ rounds of communication such that they occur after an epoch of length $n'$. On the other hand, if there is any round of communication succeeding an epoch (that begins, say at time $t$) of length $< n'$, then for that epoch, $\log\frac{\det\left(\bm S_{i, t+n'}\right)}{\det\left(\bm S_{i, t}\right)} > \sfrac{D}{n'}$. Let the communication occur at a set of rounds $t'_1, ..., t'_n$. Now, since:
\begin{align}
    \sum_{i=1}^{n-1}\log\frac{\det\left(\bm S_{i, t'_{i+1}}\right)}{\det\left(\bm S_{i, t_i}\right)} = \log\frac{\det\left(\bm S_{i, T}\right)}{\det\left(\bm S_{i, 0}\right)} \leq d\log(\rho_{\max}/\rho_{\min} + TL^2/(d\rho_{\min})),
\end{align}
We have that the total number of communication rounds succeeding epochs of length less than $n'$ is upper bounded by $\log\frac{\det\left(\bm S_{i, T}\right)}{\det\left(\bm S_{i, 0}\right)} \leq d\log(\rho_{\max}/\rho_{\min} + TL^2/(d\rho_{\min}))\cdot(n'/D)$. Combining both the results together, we have the total rounds of communication as:
\begin{align}
    n &\leq \lceil T/n' \rceil + \lceil d\log(\rho_{\max}/\rho_{\min} + TL^2/(d\rho_{\min}))\cdot(n'/D) \rceil \\
    &\leq T/n' + d\log(\rho_{\max}/\rho_{\min} + TL^2/(d\rho_{\min}))\cdot(n'/D) + 2
\end{align}
Replacing $n'$ from earlier gives us the final result.
\end{proof}
\begin{proposition}[$\beta_{i, t}$, Proposition~\ref{prop:beta_cent} of main paper]
\label{prop:beta_cent_app}
Consider an instance of the problem where synchronization occurs exactly $n$ times on instances $\bar{\bm \sigma}$, up to and including $T$ trials, and $\rho_{\min}, \rho_{\max}$ and $\kappa$ are $(\alpha/2nM)$-accurate. Then, for Algorithm~\ref{alg:centralized_feducb}, the sequence $(\beta_{i, t})_{i \in [M], t \in [T]}$ is $(\alpha, M, T)$-accurate where:
\begin{align*}
    \beta_{i, t} := \sigma\sqrt{2\log(\sfrac{2}{\alpha}) + d\log\left(\det(\bm V_{i, t})\right) - d\log(M\rho_{\min})} + MS\sqrt{\rho_{\max}} + M\gamma.
\end{align*}
\end{proposition}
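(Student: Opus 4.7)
The plan is to decompose $\tilde{\bm\theta}_{i,t}-\bm\theta^*$ into a martingale term and two privacy-bias terms, then bound each separately in the $\bm V_{i,t}$-norm. Starting from the structural identities (1)--(2) and substituting $y_{j,\tau}=\langle\bm x_{j,\tau},\bm\theta^*\rangle+\eta_{j,\tau}$, the sum $\sum\bm x_{j,\tau}\bm x_{j,\tau}^\top\bm\theta^*$ telescopes against $\bm V_{i,t}\bm\theta^*-\bar{\bm H}_t\bm\theta^*$, where $\bar{\bm H}_t=\sum_j\bm H_{j,t'}+M\rho_{\min}\bm I$ is the aggregate regularizer (the $M$ privatized per-agent perturbations from the last sync at $t'$, together with the initial $M\rho_{\min}\bm I$). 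This yields the clean identity $\tilde{\bm\theta}_{i,t}-\bm\theta^* = \bm V_{i,t}^{-1}\bm z_{i,t}+\bm V_{i,t}^{-1}\sum_j\bm h_{j,t'}-\bm V_{i,t}^{-1}\bar{\bm H}_t\bm\theta^*$, where $\bm z_{i,t}=\sum_{(j,\tau)\in\mathcal I_{i,t}}\eta_{j,\tau}\bm x_{j,\tau}$ runs over exactly the index set $\mathcal I_{i,t}$ of observations folded into $\bm V_{i,t}$---all agents up to the last sync $t'$, plus agent $i$'s own observations from $t'$ through $t-1$.

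After a triangle inequality in $\lVert\cdot\rVert_{\bm V_{i,t}}$, the two privacy-bias pieces follow from the sparse-accuracy bounds. For $\bar{\bm H}_t\bm\theta^*$, I would use $\bm V_{i,t}\succcurlyeq\bar{\bm H}_t$ (action contributions are PSD), so $\bm V_{i,t}^{-1}\preccurlyeq\bar{\bm H}_t^{-1}$, and hence $\lVert\bar{\bm H}_t\bm\theta^*\rVert_{\bm V_{i,t}^{-1}}\leq\lVert\bm\theta^*\rVert_{\bar{\bm H}_t}$; a second triangle inequality over the $M$ per-agent summands combined with $\lVert\bm H_{j,t'}\rVert\leq\rho_{\max}$ and $\lVert\bm\theta^*\rVert\leq S$ recovers the $MS\sqrt{\rho_{\max}}$ term. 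For $\sum_j\bm h_{j,t'}$, triangle inequality gives $\sum_j\lVert\bm h_{j,t'}\rVert_{\bm V_{i,t}^{-1}}$, and since $\bm V_{i,t}\succcurlyeq\bm H_{j,t'}$ for each $j$, each summand is bounded by $\lVert\bm h_{j,t'}\rVert_{\bm H_{j,t'}^{-1}}\leq\kappa$, producing the $M\kappa$ term (which I take to be what the statement records as ``$M\gamma$'').

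The martingale piece $\bm V_{i,t}^{-1}\bm z_{i,t}$ is the crux and will be handled by the self-normalized Hoeffding-type inequality of Abbasi-Yadkori, P\'al and Szepesv\'ari (2011, Theorem~1), applied with respect to the interleaved filtration $(\mathcal F_\tau)$ that records every agent's action and reward in chronological order. Two structural checks make the application work: first, by construction $\bm V_{i,t}-\bar{\bm H}_t=\sum_{(j,\tau)\in\mathcal I_{i,t}}\bm x_{j,\tau}\bm x_{j,\tau}^\top$ is exactly the Gram matrix of the action sequence defining $\bm z_{i,t}$, so the self-normalized ratio is well-formed; second, the privacy regularizer $\bar{\bm H}_t$, though random, is drawn by the tree-based Gaussian mechanism from randomness exogenous to the bandit trajectory, so one can condition on it and treat it as the (stochastic) initial PSD matrix $\bm V_0$ in the self-normalized bound. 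Applying Theorem~1 with confidence $\alpha/2$ and using $\bar{\bm H}_t\succcurlyeq M\rho_{\min}\bm I$ to lower-bound $\log\det(\bar{\bm H}_t)\geq d\log(M\rho_{\min})$ delivers the $\sigma\sqrt{2\log(2/\alpha)+\log\det(\bm V_{i,t})-d\log(M\rho_{\min})}$ term; the bound is anytime in $t$, and the union over the $M$ agents is absorbed into the constant inside the $\log$.

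The proof is closed by a union bound over two failure modes: the self-normalized concentration (probability $\leq\alpha/2$) and the event that $\rho_{\min},\rho_{\max},\kappa$ fail at some agent on some synchronization round (probability $\leq n\cdot M\cdot\alpha/(2nM)=\alpha/2$ by the sparse-accuracy hypothesis), for a total failure probability of $\alpha$. The genuinely new technical obstacle---relative to the single-agent private analysis of Shariff and Sheffet (2018)---is the bookkeeping around $\bar{\bm H}_t$: it aggregates $M$ independently drawn, time-indexed Gaussian perturbations inserted at heterogeneous sync times, and one must argue simultaneously (a) its exogeneity with respect to the noise sequence $(\eta_{j,\tau})$ and (b) its deterministic PSD lower bound $M\rho_{\min}\bm I$, uniformly across agents and rounds, before the classical self-normalized machinery can be invoked verbatim within the interleaved federated filtration.
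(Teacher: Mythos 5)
Your proposal follows essentially the same route as the paper's proof: the identical decomposition of $\tilde{\bm\theta}_{i,t}-\bm\theta^*$ into a self-normalized martingale term plus the two privacy-bias terms $\sum_j\bm H_{j,t}\bm\theta^*$ and $\sum_j\bm h_{j,t}$, the same PSD dominations $\bm V_{i,t}\succcurlyeq\bm H_{j,t}$ and $\bm V_{i,t}\succcurlyeq\bm G_{i,t}+M\rho_{\min}\bm I$ to extract $MS\sqrt{\rho_{\max}}$ and $M\kappa$, the same appeal to Theorem~1 of Abbasi-Yadkori et al.\ for the noise term, and the same $\alpha/2+\alpha/2$ union bound. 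You are right that the statement's ``$M\gamma$'' is a notational slip for $M\kappa$, and your explicit flagging of the exogeneity of the privacy noise with respect to the interleaved filtration is a point the paper's proof passes over silently rather than a departure from it.
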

\begin{proof}
Let $ \bm X_{i, <t}$ denote the set of all observations available to the agent (including private communication from other agents). Furthermore, let the noise-free Gram matrix of all observations as $\bm G_{i, t} = \sum_{\tau=1}^t \bm x_{i, \tau}\bm x_{i, \tau}^\top + \sum_{\tau = 1}^{t_s}\sum_{j=1, j \neq i}^{M} \bm x_{j, \tau}\bm x_{j, \tau}^\top$ (where $t_s$ is the last synchronization iteration). We also have that $\bm V_{i, t} = \bm G_{i, t} + \sum_{j=1}^M\bm H_{j , t}$, and for any $t$ between synchronization rounds $t_s$ and $t_{s+1}$, $\bm H_{j, t} = \bm H_{j, t_s}$ for all $j$. By definition, $\tilde{\bm\theta}_{i, t} = {\bm V}_{i, t}^{-1}\tilde{\bm u}_{i, t}$, $\tilde{\bm u}_{t, i} = {\bm u}_{i, t} + \sum_{j \in [M]} \bm h_{j, t}$ and $\bm u_{i, t} = \bm X_{i, <t}^\top \bm y_{<t}$. Therefore, we have that,
\begin{align*}
    \bm\theta^* - \tilde{\bm\theta}_{i, t} &=  \bm\theta^* - {\bm V}_{i, t}^{-1}\left(\bm X_{i, <t}^\top \bm y_{<t} + \sum_{j \in [M]} \bm h_{j, t}\right) \\
    &=  \bm\theta^* - {\bm V}_{i, t}^{-1}\left(\bm X_{i, <t}^\top \bm X_{i, <t}\bm\theta^* + \bm X_{i, <t}^\top \bm\eta_{<t} + \sum_{j \in [M]} \bm h_{j, t}\right) \\
    &= \bm\theta^* - {\bm V}_{i, t}^{-1}\left(\bm V_{i, t}\bm\theta^* - \sum_{j \in [M]} \bm H_{j, t}\bm\theta^* + \bm X_{i, <t}^\top \bm\eta_{<t} + \sum_{j \in [M]} \bm h_{j, t}\right) \\
    &= {\bm V}_{i, t}^{-1}\left( \sum_{j \in [M]} \bm H_{j, t}\bm\theta^* - \bm X_{i, <t}^\top \bm\eta_{<t} - \sum_{j \in [M]} \bm h_{j, t}\right). \\ \intertext{Multiplying both sides by ${\bm V}_{i, t}^{1/2}$ gives}
    {\bm V}_{i, t}^{1/2}\left(\bm\theta^* - \tilde{\bm\theta}_{i, t}\right) &= {\bm V}_{i, t}^{-1/2}\left( \sum_{j \in [M]} \bm H_{j, t}\bm\theta^* - \bm X_{i, <t}^\top \bm\eta_{<t} - \sum_{j \in [M]} \bm h_{j, t}\right) \\
     \implies \left\lVert\bm\theta^* - \tilde{\bm\theta}_{i, t}\right\rVert_{{\bm V}_{i, t}} &= \left\lVert \sum_{j \in [M]} \bm H_{j, t}\bm\theta^* - \bm X_{i, <t}^\top \bm\eta_{<t} - \sum_{j \in [M]} \bm h_{j, t}\right\rVert_{{\bm V}_{i, t}^{-1}} \tag{Applying $\lVert \cdot \rVert$} \\
     &\leq \sum_{j \in [M]} \left\lVert \bm H_{j, t}\bm\theta^*\right\rVert_{{\bm V}_{i, t}^{-1}} + \left\lVert \bm X_{i, <t}^\top \bm\eta_{<t}\right\rVert_{{\bm V}_{i, t}^{-1}} + \sum_{j \in [M]} \left\lVert \bm h_{j, t}\right\rVert_{{\bm V}_{i, t}^{-1}} \tag{Triangle inequality} \\
     &\leq \sum_{j \in [M]} \left\lVert \bm H_{j, t}\bm\theta^*\right\rVert_{{\bm H}_{j, t}^{-1}} + \left\lVert \bm X_{i, <t}^\top \bm\eta_{<t}\right\rVert_{{\bm V}_{i, t}^{-1}} + \sum_{j \in [M]} \left\lVert \bm h_{j, t}\right\rVert_{{\bm H}_{i, t}^{-1}} \tag{Since $\forall j \ \in [M], \bm V_{i, t} \succcurlyeq \bm H_{j, t}$} \\
     &\leq \sum_{j \in [M]} \left\lVert\bm\theta^*\right\rVert_{{\bm H}_{j, t}} + \left\lVert \bm X_{i, <t}^\top \bm\eta_{<t}\right\rVert_{({\bm G}_{i, t} + M\rho_{\min}\bm I)^{-1}} + \sum_{j \in [M]} \left\lVert \bm h_{j, t}\right\rVert_{{\bm H}_{i, t}^{-1}} \tag{Since $\forall i \ \in [M], \bm V_{i, t} \succcurlyeq \bm G_{i, t} + M\rho_{\min}\bm I$} \\ \intertext{Now, note that since we only require at most $Mn$ different noise matrices, we only need the noise sequences $\bm H_{}$ by a union bound over all $TM$ rounds ($T$ rounds per agent), we can say that simultaneously for all $i \in [M], t \in [T]$, with probability at least $1-\alpha/2$, $\lVert \bm \theta^* \rVert_{\bm H_{i, t}} \leq \sqrt{\lVert \bm H_{i, t} \rVert} \lVert \bm \theta^* \rVert \leq S\sqrt{\rho_{\max}}$ and $\lVert \bm h_{i, t}\rVert_{\bm H_{i, t}^{-1}} \leq \gamma$. Next, by Theorem 1 of Abbasi-Yadkori \textit{et al.} \cite{abbasi2011improved}, we have that, with probability at least $1-\alpha/2$ simultaneously,}
     \left\lVert \bm X_{i, <t}^\top \bm\eta_{<t}\right\rVert_{({\bm G}_{i, t} + M\rho_{\min}\bm I)^{-1}} &\leq \sigma\sqrt{2\log\frac{2}{\alpha} + \log\frac{\det\left(\bm G_{i, t} + M\rho_{\min}\bm I\right)}{\det\left(M\rho_{\min}\bm I\right)}} \\
     &\leq \sigma\sqrt{2\log\frac{2}{\alpha} + d\log\left(\frac{\rho_{\max}}{\rho_{\min}} + \frac{tL^2}{d\rho_{\min}}\right)}. 
\end{align*}
The last step follows from (a) noting that $\forall i \ \in [M], \bm G_{i, t} + M\rho_{\max}\bm I \succcurlyeq \bm V_{i, t} \succcurlyeq \bm G_{i, t} + M\rho_{\min}\bm I \implies \det(\bm G_{i, t} + M\rho_{\max}\bm I) \geq \det(\bm V_{i, t}) \geq \det(\bm G_{i, t} + M\rho_{\min}\bm I)$, and (b) the trace-determinant inequality. Putting it all together, we have that all $\beta_{i, t}$ are bounded by $\bar{\beta}_t$, given by,
\begin{align*}
    \bar{\beta}_t := \sigma\sqrt{2\log\frac{2}{\alpha} + d\log\left(\frac{\rho_{\max}}{\rho_{\min}} + \frac{tL^2}{d\rho_{\min}}\right)} + MS\sqrt{\rho_{\max}} + M\gamma.
\end{align*}
\end{proof}

\begin{lemma}
\label{cent_immediate_pseudoregret} The instantaneous pseudoregret $r_{i, t}$ obtained by any agent $i$ at any instant $t$ obeys the following:
\begin{align*}
    r_{i, t} \leq 2\bar{\beta}_T\left\lVert \bm x_{i, t} \right\rVert_{\bm V_{i, t}^{-1}}
\end{align*}
\end{lemma}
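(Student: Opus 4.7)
The bound is the standard LinUCB per-step regret estimate, but lifted to the federated setting where the confidence ellipsoid is now built from the synchronized-plus-local parameters $\bm V_{i,t}, \tilde{\bm u}_{i,t}$ and where $\beta_{i,t}$ absorbs the privacy perturbations. The proof will hinge on combining three ingredients: (i) the high-probability containment $\bm\theta^\star \in \mathcal{E}_{i,t} = \{\bm\theta : \lVert \bm\theta - \bar{\bm\theta}_{i,t}\rVert_{\bm V_{i,t}} \le \beta_{i,t}\}$, which is exactly the conclusion of Proposition~\ref{prop:beta_cent}; (ii) the UCB selection rule on line 8 of Algorithm~\ref{alg:centralized_feducb}; (iii) a single application of Cauchy--Schwarz in the $\bm V_{i,t}$-geometry.

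\textbf{Steps.} First, I would condition on the good event of Proposition~\ref{prop:beta_cent_app}, so that $\bm\theta^\star \in \mathcal{E}_{i,t}$ for every $i, t$ simultaneously. Under this event, the ``optimistic'' inequality
\[
\langle \bm x^\star_{i,t}, \bm\theta^\star\rangle \;\le\; \langle \bar{\bm\theta}_{i,t}, \bm x^\star_{i,t}\rangle + \beta_{i,t}\lVert \bm x^\star_{i,t}\rVert_{\bm V_{i,t}^{-1}}
\]
holds because the right-hand side is exactly $\max_{\bm\theta \in \mathcal{E}_{i,t}}\langle \bm\theta, \bm x^\star_{i,t}\rangle$. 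Next, by the UCB selection rule, $\bm x_{i,t}$ is the maximizer of $\langle \bar{\bm\theta}_{i,t}, \bm x\rangle + \beta_{i,t}\lVert \bm x\rVert_{\bm V_{i,t}^{-1}}$ over $\mathcal{D}_{i,t}$, so the same quantity evaluated at $\bm x_{i,t}$ upper bounds the one at $\bm x^\star_{i,t}$. Chaining these two yields
\[
\langle \bm x^\star_{i,t}, \bm\theta^\star\rangle \;\le\; \langle \bar{\bm\theta}_{i,t}, \bm x_{i,t}\rangle + \beta_{i,t}\lVert \bm x_{i,t}\rVert_{\bm V_{i,t}^{-1}}.
\]
Finally I would subtract $\langle \bm x_{i,t}, \bm\theta^\star\rangle$ from both sides to get $r_{i,t} \le \langle \bar{\bm\theta}_{i,t} - \bm\theta^\star, \bm x_{i,t}\rangle + \beta_{i,t}\lVert \bm x_{i,t}\rVert_{\bm V_{i,t}^{-1}}$, and bound the first term by Cauchy--Schwarz in the $\bm V_{i,t}$-norm: $\langle \bar{\bm\theta}_{i,t} - \bm\theta^\star, \bm x_{i,t}\rangle \le \lVert \bar{\bm\theta}_{i,t} - \bm\theta^\star\rVert_{\bm V_{i,t}}\lVert \bm x_{i,t}\rVert_{\bm V_{i,t}^{-1}} \le \beta_{i,t}\lVert \bm x_{i,t}\rVert_{\bm V_{i,t}^{-1}}$, again using $\bm\theta^\star \in \mathcal{E}_{i,t}$. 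This gives $r_{i,t} \le 2\beta_{i,t}\lVert \bm x_{i,t}\rVert_{\bm V_{i,t}^{-1}}$.

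\textbf{From $\beta_{i,t}$ to $\bar\beta_T$.} The last step is to replace the per-agent, per-time exploration radius $\beta_{i,t}$ with the uniform bound $\bar\beta_T$ from Proposition~\ref{prop:beta_cent_app}. The expression for $\beta_{i,t}$ is monotone nondecreasing in $t$ via the $\det(\bm V_{i,t})$ factor (since $\bm V_{i,t}$ only accrues rank-one updates and PSD perturbations on synchronization), and the trace--determinant inequality on $\bm V_{i,t}$ yields the uniform-in-$i$ bound $\beta_{i,t} \le \bar\beta_T$ already extracted in the proof of Proposition~\ref{prop:beta_cent_app}. Substituting gives the stated $2\bar\beta_T \lVert \bm x_{i,t}\rVert_{\bm V_{i,t}^{-1}}$ bound.

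\textbf{Expected obstacle.} There is no substantive obstacle: the argument is the standard optimism-plus-Cauchy--Schwarz template. The only subtlety worth flagging is that the confidence set used here is the \emph{noisy} one centered at $\bar{\bm\theta}_{i,t} = \bm V_{i,t}^{-1}\tilde{\bm u}_{i,t}$ rather than the ridge estimate, so it is crucial that Proposition~\ref{prop:beta_cent_app} is invoked with the $\bm V_{i,t}$-norm (not a noise-free Gram matrix) and that the good event is taken uniformly over all $(i,t)$, which is exactly how $\bar\beta_T$ is constructed. Once that is in place, the three-line UCB argument closes the lemma.
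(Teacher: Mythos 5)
Your proposal is correct and follows essentially the same optimism-plus-Cauchy--Schwarz argument as the paper; the only cosmetic difference is that you phrase optimism via the closed-form UCB index $\langle \bar{\bm\theta}_{i,t}, \bm x\rangle + \beta_{i,t}\lVert \bm x\rVert_{\bm V_{i,t}^{-1}}$, while the paper writes it as a joint $\argmax$ over $\mathcal D_{i,t}\times\mathcal E_{i,t}$ and then applies the triangle inequality through the ellipsoid center $\tilde{\bm\theta}_{i,t}$ --- the two formulations are equivalent since $\max_{\bm\theta\in\mathcal E_{i,t}}\langle\bm\theta,\bm x\rangle$ equals the UCB index. Your final passage from $\beta_{i,t}$ to the uniform $\bar\beta_T$ matches the paper's last step as well.
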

\begin{proof}
At every round, each agent $i$ selects an ``optimistic'' action $\bm x_{i, t}$ such that,
\begin{align}
    \left(\bm x_{i, t}, \bar{\bm \theta}_{i, t}\right) = \argmax_{(\bm x, \bm \theta) \in \mathcal D_{i, t} \times \mathcal E_{i, t}} \left\langle \bm x, \bm \theta \right\rangle.
\end{align}
Let $\bm x^*_{i, t}$ be the optimal action at time $t$ for agent $i$, i.e., $\bm x^*_{i, t} =  \argmax_{\bm x \in \mathcal D_{i, t}} \left\langle \bm x, \bm \theta^* \right\rangle$. We can then decompose the immediate pseudoregret $r_{i, t}$ for agent $i$ as the following.
\begin{align*}
    r_{i, t} &= \left\langle \bm x^*_{i, t}, \bm \theta^*\right\rangle - \left\langle \bm x_{i, t}, \bm \theta^*\right\rangle \\
    &\leq \left\langle \bm x_{i, t}, \bar{\bm \theta}_{i, t} \right\rangle  - \left\langle \bm x_{i, t}, \bm \theta^*\right\rangle \tag{Since $(\bm x_{i, t}, \bar{\bm \theta}_{i, t})$ is optimistic} \\
    &= \left\langle \bm x_{i, t}, \bar{\bm \theta}_{i, t} -  \bm \theta^*\right\rangle \\
    &= \left\langle \bm V_{i, t}^{-1/2}\bm x_{i, t}, \bm V_{i, t}^{1/2}\left(\bar{\bm \theta}_{i, t} -  \bm \theta^*\right)\right\rangle \tag{$\bm V_{i, t} \succcurlyeq \bm 0$} \\
    &\leq \left\lVert \bm x_{i, t} \right\rVert_{\bm V_{i, t}^{-1}} \left\lVert \bar{\bm \theta}_{i, t} -  \bm \theta^* \right\rVert_{\bm V_{i, t}} \tag{Cauchy-Schwarz} \\
    &\leq \left\lVert \bm x_{i, t} \right\rVert_{\bm V_{i, t}^{-1}} \left(\left\lVert \bar{\bm \theta}_{i, t} -  \tilde{\bm \theta}_{i, t} \right\rVert_{\bm V_{i, t}} + \left\lVert \tilde{\bm \theta}_{i, t} -  \bm \theta^* \right\rVert_{\bm V_{i, t}}\right) \tag{Triangle inequality} \\
    & \leq 2\beta_{t, i} \left\lVert \bm x_{i, t} \right\rVert_{\bm V_{i, t}^{-1}} \tag{Since $\bar{\bm \theta}_{i, t}, \bm\theta^* \in \mathcal E_{i, t}$} \\
    & \leq 2\bar{\beta}_T\left\lVert \bm x_{i, t} \right\rVert_{\bm V_{i, t}^{-1}} \tag{By Proposition~\ref{prop:beta_cent}}.
\end{align*}
\end{proof}
\begin{lemma}[Elliptical Potential, Lemma 3 of~\cite{abbasi2011improved}, Lemma 22 of~\cite{shariff2018differentially}]
Let $\bm x_1, \bm x_2, ..., \bm x_n \in \mathbb R^d$ be vectors such that $\lVert \bm x \rVert_2 \leq L$. Then, for any positive definite matrix $\bm U_0 \in \mathbb R^{d \times d}$, define $\bm U_t := \bm U_{t-1} + \bm x_t \bm x_t^\top$ for all $t$. Then, for any $\nu > 1$,
\begin{align*}
    \sum_{t=1}^n \lVert \bm x_t \rVert_{\bm U_{t-1}^{-1}}^2 \leq 2d\log_\nu\left(\frac{\text{tr}(\bm U_0) + nL^2}{d \det^{1/d}(\bm U_0)}\right).
\end{align*}
\label{cent_elliptical_potential}
\end{lemma}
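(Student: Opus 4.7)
The plan is to follow the classical elliptical potential argument in three steps: a determinantal identity converting the sum into a log-det, a pointwise scalar inequality that upgrades $\log(1+y)$ back to $y$, and the trace-determinant (AM-GM) inequality.

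First, I will apply the matrix determinant lemma to each rank-one update $\bm U_t = \bm U_{t-1} + \bm x_t \bm x_t^\top$, obtaining $\det(\bm U_t) = \det(\bm U_{t-1})(1 + \lVert \bm x_t \rVert_{\bm U_{t-1}^{-1}}^2)$. Telescoping from $t=1$ to $n$ and taking logarithms converts the product into the sum
\begin{align*}
\log\frac{\det(\bm U_n)}{\det(\bm U_0)} = \sum_{t=1}^n \log\!\left(1 + \lVert \bm x_t \rVert_{\bm U_{t-1}^{-1}}^2\right).
\end{align*}

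Next I will use the elementary inequality $y \leq \frac{B}{\log(1+B)}\log(1+y)$, valid for $y \in [0, B]$, which follows from the monotonicity of $y \mapsto \log(1+y)/y$. The relevant $B$ here is an upper bound on $\lVert \bm x_t \rVert_{\bm U_{t-1}^{-1}}^2$, which is finite since $\bm U_{t-1} \succeq \bm U_0 \succ \bm 0$ and $\lVert \bm x_t \rVert \leq L$. Selecting $\nu > 1$ so that $\frac{B \log \nu}{\log(1+B)} \leq 2$ (i.e., tying $\nu$ to the spectrum of $\bm U_0$ and to $L$), the inequality upgrades the telescoping identity to
\begin{align*}
\sum_{t=1}^n \lVert \bm x_t \rVert_{\bm U_{t-1}^{-1}}^2 \leq 2\log_\nu \frac{\det(\bm U_n)}{\det(\bm U_0)}.
\end{align*}

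Finally, I will bound the right-hand side by the AM-GM inequality for the eigenvalues of $\bm U_n$: $\det(\bm U_n) \leq (\mathrm{tr}(\bm U_n)/d)^d$. Since $\mathrm{tr}(\bm U_n) = \mathrm{tr}(\bm U_0) + \sum_{t=1}^n \lVert \bm x_t \rVert^2 \leq \mathrm{tr}(\bm U_0) + nL^2$, this gives $\log_\nu \det(\bm U_n) \leq d\log_\nu\!\left((\mathrm{tr}(\bm U_0) + nL^2)/d\right)$. Subtracting $\log_\nu\det(\bm U_0) = d\log_\nu\det^{1/d}(\bm U_0)$ from both sides inside the $\log_\nu$ then produces the claimed bound $2d\log_\nu\!\left((\mathrm{tr}(\bm U_0)+nL^2)/(d\det^{1/d}(\bm U_0))\right)$.

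The only nonroutine step is the pointwise inequality: the constant $2$ appearing in the statement tacitly ties the base $\nu$ to a bound on $\lVert \bm x_t \rVert_{\bm U_{t-1}^{-1}}^2$. In the regime used by~\cite{abbasi2011improved, shariff2018differentially} one typically assumes $\bm U_0 \succeq \bm I$, so that the quadratic forms stay below $L^2$ and one may take $\nu = 1 + L^2$ (or $\nu = 2$ when $L \leq 1$); in our usage the privacy-induced regularization $M\rho_{\min}\bm I$ plays this role, which is why the $\rho_{\min}$-dependent logarithm appears throughout the regret analysis. Beyond this bookkeeping, every step is an application of standard linear algebra.
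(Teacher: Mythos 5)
Your proof is correct and is exactly the standard elliptical-potential argument (matrix determinant lemma, telescoping, the pointwise $y \le c\log(1+y)$ bound, and the trace--determinant inequality); the paper itself supplies no proof here and simply defers to Lemma 22 of \cite{shariff2018differentially}, which proceeds the same way. Your observation that the constant $2$ forces a side condition tying $\nu$ to $L$ and $\lambda_{\min}(\bm U_0)$ --- so the bound cannot hold literally ``for any $\nu > 1$'' as stated --- is also right; this is an implicit hypothesis inherited from the cited sources, and in the paper's application it is discharged by the $M\rho_{\min}\bm I$ regularization, exactly as you note.
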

\begin{proof}
We urge the readers to refer to~\cite{shariff2018differentially} for a proof of this statement.
\end{proof}
\begin{theorem}[Private Group Regret with Centralized Controller, Theorem~\ref{thm:regret_cent} of the main paper]
\label{thm:regret_cent_app}
Assuming Proposition~\ref{prop:beta_cent} holds, and synchronization occurs in at least $n = \Omega\left(d\log\left(\sfrac{\rho_{\max}}{\rho_{\min}} + \sfrac{TL^2}{d\rho_{\min}}\right)\right)$ rounds, Algorithm~\ref{alg:centralized_feducb} obtains the following group pseudoregret with probability at least $1-\alpha$:
\begin{align*}
    \mathcal R_M(T)  &= O\left(\sigma\sqrt{MTd}\left(\log\left(\sfrac{\rho_{\max}}{\rho_{\min}} + \sfrac{TL^2}{d\rho_{\min}}\right) +\sqrt{\log\sfrac{2}{\alpha}} + MS\sqrt{\rho_{\max}} +M\kappa\right)\right).
\end{align*}
\end{theorem}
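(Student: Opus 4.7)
The plan is to follow the standard UCB regret decomposition and then pay the technical price for lazy synchronization at the elliptical-potential step. First, by Lemma~\ref{cent_immediate_pseudoregret}, the instantaneous pseudoregret satisfies $r_{i,t} \le 2\bar{\beta}_T\lVert \bm x_{i,t}\rVert_{\bm V_{i,t}^{-1}}$, and because $\langle \bm\theta^*,\bm x\rangle\le 1$ we also have $r_{i,t}\le 2$, so $r_{i,t} \le 2\bar{\beta}_T\min\!\{1,\lVert \bm x_{i,t}\rVert_{\bm V_{i,t}^{-1}}\}$. Summing over agents and trials and applying Cauchy--Schwarz,
\begin{equation*}
\mathcal R_M(T) \;\le\; 2\bar{\beta}_T \sqrt{MT\,\sum_{i=1}^{M}\sum_{t=1}^{T}\min\!\{1,\lVert \bm x_{i,t}\rVert_{\bm V_{i,t}^{-1}}^2\}}.
\end{equation*}
So the entire task is reduced to bounding $\sum_{i,t}\min\{1,\lVert \bm x_{i,t}\rVert_{\bm V_{i,t}^{-1}}^2\}$ and then substituting $\bar{\beta}_T$ from Proposition~\ref{prop:beta_cent_app}.

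The second step is where the federated aspect bites: the elliptical-potential lemma (Lemma~\ref{cent_elliptical_potential}) only applies to a Gram matrix that is updated by every action as it is played, whereas $\bm V_{i,t}$ is frozen at the last sync $t'$ for the contributions of the other agents. I would introduce the imaginary ``all-information'' matrix $\bar{\bm V}_{t} := M\rho_{\min}\bm I + \sum_{j\in[M]}\sum_{\tau<t}\bm x_{j,\tau}\bm x_{j,\tau}^\top$ and observe that $\bm V_{i,t} \succcurlyeq M\rho_{\min}\bm I + \sum_{j,\tau\le t'}\bm x_{j,\tau}\bm x_{j,\tau}^\top + \sum_{\tau=t'}^{t-1}\bm x_{i,\tau}\bm x_{i,\tau}^\top$, and use the synchronization trigger to show that between successive sync rounds the ratio $\det(\bar{\bm V}_t)/\det(\bm V_{i,t})$ is at most a constant (this is exactly what the threshold $D/\Delta t_i$ was designed to enforce: $\log\det(\bm V_{i,t}+\bm x_{i,t}\bm x_{i,t}^\top)-\log\det(\bm S_{i,t})\le D/\Delta t_i$ aggregated over an epoch gives $O(D)$). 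A standard determinant-ratio lemma then yields $\lVert \bm x_{i,t}\rVert_{\bm V_{i,t}^{-1}}^2 \le c\cdot \lVert \bm x_{i,t}\rVert_{\bar{\bm V}_{t}^{-1}}^2$ for an absolute constant $c$ depending on $D$. Now $\bar{\bm V}_t$ \emph{is} the standard ``add-as-you-go'' sequence over the concatenated length-$MT$ history, so Lemma~\ref{cent_elliptical_potential} gives
\begin{equation*}
\sum_{i,t}\min\!\{1,\lVert \bm x_{i,t}\rVert_{\bar{\bm V}_t^{-1}}^2\}\;\le\; 2d\log\!\left(\frac{\rho_{\max}}{\rho_{\min}} + \frac{TL^2}{d\rho_{\min}}\right).
\end{equation*}

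Putting the pieces together, I would then substitute the closed form
\begin{equation*}
\bar{\beta}_T = \sigma\sqrt{2\log(2/\alpha) + d\log(\rho_{\max}/\rho_{\min}+TL^2/(d\rho_{\min}))} + MS\sqrt{\rho_{\max}} + M\kappa,
\end{equation*}
so the leading $\sigma\sqrt{d}\log(\cdot)$ part of $\bar{\beta}_T$ combines with the $\sqrt{d\log(\cdot)}$ factor from elliptical potential and the $\sqrt{MT}$ from Cauchy--Schwarz to produce the announced $\sigma\sqrt{MTd}\,\log(\rho_{\max}/\rho_{\min}+TL^2/(d\rho_{\min}))$ term, while the $MS\sqrt{\rho_{\max}}$ and $M\kappa$ summands of $\bar{\beta}_T$ propagate directly through the factor $\sqrt{MTd}$. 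The hypothesis $n=\Omega(d\log(\rho_{\max}/\rho_{\min}+TL^2/(d\rho_{\min})))$ is invoked precisely to justify choosing $D$ small enough (via Proposition~\ref{prop:communication_complexity_app}) that the constant $c$ above is absolute; any smaller $n$ would inflate $c$ and spoil the rate.

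The main obstacle is the stale-matrix step: correctly quantifying how much $\bm V_{i,t}$ undercounts compared to $\bar{\bm V}_t$, and showing that the synchronization trigger---which is stated with the slightly awkward extra shift $M(\rho_{\max}-\rho_{\min})\bm I$ to accommodate noise re-sampling at each sync---still yields a constant determinant ratio across an entire epoch. This will require summing the trigger inequality over the $\Delta t_i$ rounds of the epoch and separately controlling the noise perturbation using the $(\alpha/2nM,\bar{\bm\sigma})$-accuracy of $\rho_{\min},\rho_{\max}$. Everything else (Cauchy--Schwarz, elliptical potential, substitution of $\bar{\beta}_T$) is bookkeeping modeled on~\cite{abbasi2011improved,shariff2018differentially}.
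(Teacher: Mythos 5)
Your skeleton (immediate-regret bound from Lemma~\ref{cent_immediate_pseudoregret}, Cauchy--Schwarz, elliptical potential applied to an imaginary ``all-information'' Gram matrix over the concatenated length-$MT$ history, then substitution of $\bar\beta_T$) is the same as the paper's, but the step you yourself flag as the main obstacle is resolved incorrectly, and the fix is not a refinement of your claim --- it is a different argument. You assert that the synchronization trigger forces $\det(\bar{\bm V}_t)/\det(\bm V_{i,t})$ to be at most an absolute constant $c$ in \emph{every} epoch. It does not. The trigger only guarantees $\Delta t_i \cdot \log\bigl(\det(\bm V_{i,t})/\det(\bm S_{i,t})\bigr) \le D$, i.e.\ it controls the \emph{product} of the epoch length and the log-determinant growth, not the growth itself; for a short epoch the allowed growth is $D/\Delta t_i$, and with the paper's choice $D = \Theta\bigl(Td/\log(\cdot)\bigr)$ the resulting ratio can be $e^{\Theta(D)}$, which is astronomically far from a constant. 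Moreover the ratio you actually need involves the \emph{other} agents' unobserved actions accumulated since the last sync, which the trigger for agent $i$ says nothing about directly. With your ``constant $c$'' replaced by its true worst-case value, the bound collapses.

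The paper's proof avoids this by a pigeonhole/telescoping argument that you are missing: the product of the per-epoch determinant ratios $\det(\bm V_k)/\det(\bm V_{k-1})$ telescopes to $\det(\bm V_p)/\det(\bm V_0) \le \bigl(\rho_{\max}/\rho_{\min} + TL^2/(d\rho_{\min})\bigr)^d$, so at most $R = \lceil d\log_\nu(\cdot)\rceil$ epochs can have ratio exceeding $\nu$. In the remaining ``good'' epochs the stale matrix is $\nu$-comparable to the all-information matrix and your elliptical-potential argument goes through exactly as you wrote it. The $R$ ``bad'' epochs are then handled \emph{separately}: within such an epoch the regret is bounded not by comparison to $\bar{\bm V}_t$ but by a local elliptical potential over that epoch alone, which the trigger bounds by $\sqrt{t_k\log(\det(\bm V_{i,t+t_k})/\det(\bm V_{i,t}))} \le \sqrt{D}$, giving at most $2\nu\bar\beta_T M\sqrt{D}$ per bad epoch and $R\cdot 2\nu\bar\beta_T M\sqrt{D}$ in total. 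The choice $D = 2Td\bigl(\log_\nu(\cdot)+1\bigr)^{-1}$ is then made to balance the good-epoch and bad-epoch contributions (and the hypothesis $n = \Omega(d\log(\cdot))$ is what makes the good-epoch term meaningful, not a device for shrinking your constant $c$). Without this two-case decomposition your proof does not close.
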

\begin{proof}
Consider a hypothetical agent that takes the following $MT$ actions $\bm x_{1, 1}, \bm x_{2, 1}, ..., \bm x_{1, 2}, \bm x_{2, 2}, ..., \bm x_{M-1, T}, \bm x_{M, T}$ sequentially. Let $\bm W_{i, t} = M\rho_{\min}\bm I + \sum_{j=1}^M \sum_{u=1}^{t-1} \bm x_{j, u}\bm x_{j, u}^\top + \sum_{j=1}^{i-1} \bm x_{j, t}\bm x_{j, t}^\top$ be the Gram matrix formed until the hypothetical agent reaches $\bm x_{i, t}$. By Lemma~\ref{cent_elliptical_potential}, we have that
\begin{align}
    \sum_{t=1}^T\sum_{i=1}^M \lVert \bm x_{i, t} \rVert_{\bm W_{i, t}^{-1}}^2 \leq 2d\log\left(1+ \frac{TL^2}{d\rho_{\min}}\right).
\end{align}
Now, in the original setting, let $T_1, T_2, ..., T_{p-1} $ be the trials at which synchronization occurs. After any round $T_k$ of synchronization, consider the cumulative Gram matrices of all observations obtained until that round as $\bm V_k, k = 1, ..., p-1$, regularized by $M\rho_{\min}\bm I$, i.e., $\bm V_k = \sum_{i \in [M]}\sum_{t=1}^{T_k} \bm x_{i, t}\bm x_{i, t}^\top + M\rho_{\min}\bm I$. Finally, let $\bm V_p$ denote the (regularized) Gram matrix with all trials at time $T$, and $\bm V_0 = M\rho_{\min}\bm I$. Therefore, we have that $\det(\bm V_0) = (M\rho_{\min})^d$, and that $\det(\bm V_p) \leq \left(\frac{\text{tr}(\bm V_p)}{d}\right)^d \leq (M\rho_{\max} + MTL^2/d)^d$. Therefore, for any $\nu > 1$,
\begin{align*}
    \log_\nu\left(\frac{\det(\bm V_p)}{\det(\bm V_0)}\right) \leq d \log_\nu\left(\frac{\rho_{\max}}{\rho_{\min}} + \frac{TL^2}{d\rho_{\min}}\right).
\end{align*}
Let $R = \left\lceil d \log_\nu\left(\frac{\rho_{\max}}{\rho_{\min}} + \frac{TL^2}{d\rho_{\min}}\right)\right\rceil$. It follows that in all but $R$ periods between synchronization, 
\begin{align}
    1 \leq \frac{\det(\bm V_k)}{\det(\bm V_{k-1})} \leq \nu. \label{cent_gap_bound}
\end{align}
We consider the event $E$ to be the period $k$ when Equation \ref{cent_gap_bound} holds. Now, for any $T_{k-1} \leq t \leq T_k$, consider the immediate pseudoregret for any agent $i$. By Lemma~\ref{cent_immediate_pseudoregret}, we have
\begin{align*}
    r_{i, t} &\leq 2\bar\beta_T \lVert x_{i, t} \rVert_{\bm V_{i, t}^{-1}} \\
    &\leq 2\bar\beta_T \lVert x_{i, t} \rVert_{(\bm G_{i, t} + M\rho_{\min}\bm I)^{-1}} \tag{$\bm V_{i, t} \succcurlyeq \bm G_{i, t} + M\rho_{\min}\bm I$}\\
    &\leq 2\bar\beta_T \lVert x_{i, t} \rVert_{\bm W_{i, t}^{-1}} \cdot \sqrt{\frac{\det(\bm W_{i, t})}{ \det(\bm G_{i, t} + M\rho_{\min}\bm I)}} \\
    &\leq 2\bar\beta_T \lVert x_{i, t} \rVert_{\bm W_{i, t}^{-1}} \cdot \sqrt{\frac{\det(\bm V_{k})}{ \det(\bm G_{i, t} + M\rho_{\min}\bm I)}} \tag{$\bm V_{k} \succcurlyeq \bm W_{i, t}$}\\
    &\leq 2\bar\beta_T \lVert x_{i, t} \rVert_{\bm W_{i, t}^{-1}} \cdot \sqrt{\frac{\det(\bm V_{k})}{ \det(\bm V_{k-1})}} \tag{$\bm G_{i, t} + M\rho_{\min}\bm I  \succcurlyeq \bm V_{k-1}$}\\
    &\leq 2\nu\bar\beta_T \lVert x_{i, t} \rVert_{\bm W_{i, t}^{-1}}. \tag{Event $E$ holds}\\
\end{align*}
Now, we can sum up the immediate pseudoregret over all such periods where $E$ holds to obtain the total regret for these periods. With probability at least $1-\alpha$,
\begin{align*}
    \text{Regret}(T, E) &= \sum_{i=1}^M\sum_{t \in [T] : \text{E is true}} r_{i, t} \\
    &\leq \sqrt{MT\left(\sum_{i=1}^M\sum_{t \in [T] : \text{E is true}} r^2_{i, t}\right)} \\
    &\leq 2\nu\bar\beta_T\sqrt{MT\left(\sum_{i=1}^M\sum_{t \in [T] : \text{E is true}} \lVert x_{i, t} \rVert_{\bm W_{i, t}^{-1}}\right)} \\
    &\leq 2\nu\bar\beta_T\sqrt{MT\left(\sum_{i=1}^M\sum_{t \in [T]} \lVert x_{i, t} \rVert_{\bm W_{i, t}^{-1}}\right)} \\
    &\leq 2\nu\bar\beta_T\sqrt{2MTd\log_\nu\left(1+ \frac{TL^2}{d\rho_{\min}}\right)}. \\
\end{align*}
Now let us consider the periods in which $E$ does not hold. In any such period between synchronization of length $t_k = T_{k} - T_{k-1}$, we have, for any agent $i$, the regret accumulated given by:
\begin{align*}
    \text{Regret}([T_{k-1}, T_{k}]) &= \sum_{t=T_{k-1}}^{T_k}\sum_{i=1}^M r_{i, t} \\
    &\leq 2\nu\bar\beta_T \left(\sum_{i=1}^M \sqrt{t_k\sum_{t=T_{k-1}}^{T_k} \lVert \bm x_{i, t}\rVert^2_{\bm V_{i, t}^{-1}}}\right) \\
    &\leq 2\nu\bar\beta_T \left(\sum_{i=1}^M \sqrt{t_k\log_\nu\left(\frac{\det(\bm V_{i, t+t_k})}{\det(\bm V_{i, t})}\right) }\right) \\
    &\leq 2\nu\bar\beta_T \left(\sum_{i=1}^M \sqrt{t_k\log_\nu\left(\frac{\det(\bm G_{i, t+t_k} + M\rho_{\max}\bm I)}{\det(\bm G_{i, t} + M\rho_{\min}\bm I)}\right) }\right) \\ \intertext{By Algorithm~\ref{alg:centralized_feducb}, we know that for all agents, $ t_k\log_\nu\left(\frac{\det(\bm G_{i, t+t_k} + M\rho_{\max}\bm I)}{\det(\bm G_{i, t} + M\rho_{\min}\bm I)}\right) \leq D$ (since there would be a synchronization round otherwise), therefore}
    &\leq 2\nu\bar\beta_TM\sqrt{D}. \\
\end{align*}
Now, note that of the total $p$ periods between synchronizations, only at most $R$ periods will not have event $E$ be true. Therefore, the total regret over all these periods can be bound as,
\begin{align*}
    \text{Regret}(T, \bar{E}) &\leq R\cdot2\nu\bar\beta_TM\sqrt{D} \\
    &\leq 2\nu\bar\beta_TM\sqrt{D}\left(d \log_\nu\left(\frac{\rho_{\max}}{\rho_{\min}} + \frac{TL^2}{d\rho_{\min}}\right) + 1\right).
\end{align*}
Adding it all up together gives us,
\begin{align*}
    \text{Regret}(T) &= \text{Regret}(T, E) + \text{Regret}(T, \bar{E}) \\
    &\leq 2\nu\bar\beta_T\left[\sqrt{2MTd\log_\nu\left(\frac{\rho_{\max}}{\rho_{\min}} + \frac{TL^2}{d\rho_{\min}}\right)} + M\sqrt{D}\left(d \log_\nu\left(\frac{\rho_{\max}}{\rho_{\min}} + \frac{TL^2}{d\rho_{\min}}\right) + 1\right)\right]
\end{align*}
Setting $D = 2Td\left(\log_\nu\left(\frac{\rho_{\max}}{\rho_{\min}} + \frac{TL^2}{d\rho_{\min}}\right) + 1\right)^{-1}$, we have:
\begin{align*}
    \text{Regret}(T) &\leq 4\nu\bar\beta_T\sqrt{2MTd\left(\log_\nu\left(\frac{\rho_{\max}}{\rho_{\min}} + \frac{TL^2}{d\rho_{\min}}\right)+1\right)}  \\ 
    &\leq 4\nu\left(\sigma\sqrt{2\log\frac{2}{\alpha} + d\log\left(\frac{\rho_{\max}}{\rho_{\min}} + \frac{TL^2}{d\rho_{\min}}\right)} + MS\sqrt{\rho_{\max}} + M\gamma\right)\sqrt{2MTd\left(\log_\nu\left(\frac{\rho_{\max}}{\rho_{\min}} + \frac{TL^2}{d\rho_{\min}}\right)+1\right)}.
\end{align*}
Optimizing over $\nu$ gives us the final result. Asymptotically, we have (setting $\nu = e$), with probability at least $1-\alpha$,
\begin{align}
    \text{Regret}(T)  &= O\left(\sigma\sqrt{MTd}\left(\log\left(\frac{\rho_{\max}}{\rho_{\min}} + \frac{TL^2}{d\rho_{\min}}\right) +\sqrt{\log\frac{2}{\alpha}}\right)\right).
\end{align}
\end{proof}

\begin{algorithm}[t!]
\caption{\textsc{Decentralized FedUCB}$(D, M, T, \rho_{\min}, \rho_{\max}, \mathcal G, \gamma)$}
\label{alg:decentralized_feducb}
\small
\begin{algorithmic}[1] 
\STATE \textbf{Initialization}: $\forall i, \forall g \in [\gamma]$, set $\bm S^{(g)}_{i, 1} \leftarrow \bm 0, \bm s^{(g)}_{i, 1} \leftarrow \bm 0, \bm H^{(g)}_{i, 0}\leftarrow \bm 0, \bm h^{(g)}_{i, 0} \leftarrow \bm 0, \bm U^{(g)}_{i, 1}\leftarrow \bm 0, \bar{\bm u}^{(g)}_{i, 1} \leftarrow \bm 0$. 
\FOR{each iteration $t \in [T]$}
\FOR{each agent $i \in [M]$}
\STATE Set subsampling index $g \leftarrow t \mod \gamma$.
\STATE Run lines 4-13 of Algorithm 1 with $\bm S^{(g)}_{i, t}, \bm s^{(g)}_{i, t}, \bm U^{(g)}_{i, t}, \bar{\bm u}^{(g)}_{i, t}, \bm V^{(g)}_{i, t}, \tilde{\bm u}^{(g)}_{i, t}, \bm H^{(g)}_{i, t}, {\bm h}^{(g)}_{i, t}$
\IF{$\log\left(\frac{\det\left(\bm V_{i, t}^{(g)} + \bm x^{(g)}_{i, t}(\bm x^{(g)}_{i, t})^\top + M(\rho_{\max}-\rho_{\min})\bm I\right)}{\det\left(\bm S^{(g)}_{i, t}\right)}\right)\geq \frac{D}{(\Delta t_{i, g}+1)(1+L^2)}$}
\STATE \textsc{Request To Synchronize}$(i, g)$ $\leftarrow$ \textsc{True}.
\ENDIF
\IF{\textsc{Request To Synchronize}$(i, g)$}
\STATE \textsc{Broadcast} Message \textsc{Synchronize}$(i, g, t)$ from agent $i$ at time $t$ to all neighbors.
\ENDIF
\FOR{message $m$ received at time $t$ by agent $i$}
\IF{$m$ = \textsc{Synchronize}$(i', g', t')$}
\IF{$i'$ belongs to the same clique as $i$ and $t' \geq t - \gamma$}
\STATE Agent sends $\widehat{\bm Q}^{(g')}_{i, t} \rightarrow \textsc{Privatizer}$ and gets $\widehat{\bm U}^{(g')}_{i, t+1}, \widehat{\bm u}^{(g')}_{i, t+1} \leftarrow \textsc{Privatizer}$.
\STATE Agent broadcasts $\widehat{\bm U}^{(g')}_{i, t+1}, \widehat{\bm u}^{(g')}_{i, t+1}$ to all neighbors.
\ENDIF
\IF{$m = \widehat{\bm U}^{(g')}_{i', t'+1}, \widehat{\bm u}^{(g')}_{i', t'+1}$}
\IF{$i'$ belongs to the same clique as $i$ and $t' \geq t - \gamma$}
\STATE Agent updates $\bm S^{(g')}_{i, t+1}, \bm S^{(g')}_{i, t+1}$ with $\widehat{\bm U}^{(g')}_{i', t'+1}, \widehat{\bm u}^{(g')}_{i', t'+1}$.
\ENDIF
\ENDIF
\ENDIF
\ENDFOR
\ENDFOR
\ENDFOR
\end{algorithmic}
\end{algorithm}
\begin{theorem}[Decentralized Private Group Regret, Theorem~\ref{thm:regret_decent} of the main paper]
\label{thm:regret_decent_app}
Assuming Proposition~\ref{prop:beta_cent} holds, and synchronization occurs in at least $n = \Omega\left(d(\bar{\chi}(\mathcal G_\gamma)\cdot\gamma)(1+L^2)^{-1}\log\left(\sfrac{\rho_{\max}}{\rho_{\min}} + TL^2/d\rho_{\min}\right)\right)$ rounds, decentralized \textsc{FedUCB} obtains the following group pseudoregret with probability at least $1-\alpha$:
\begin{align*}
    \mathcal{R}_M(T)  &= O\left(\sigma\sqrt{M(\bar{\chi}(\mathcal G_\gamma)\cdot \gamma)Td}\left(\log\left(\frac{\rho_{\max}}{\rho_{\min}} + \frac{TL^2}{\gamma d\rho_{\min}}\right) +\sqrt{\log\frac{2}{\alpha}}+ MS\sqrt{\rho_{\max}} +M\kappa\right)\right).
\end{align*}
\end{theorem}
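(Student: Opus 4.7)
My plan is to reduce the decentralized problem to a collection of independent centralized subproblems, one per (clique, subsampling-index) pair, and then apply Theorem~\ref{thm:regret_cent_app} inside each subproblem.

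First, I would decompose time and the network along two orthogonal axes. The round-robin use of the $\gamma$ estimators $\bar{\bm\theta}_{i,g}$ partitions $[T]$ into $\gamma$ interleaved subsequences, each of length $T_g := \lceil T/\gamma \rceil$, where the $g$-th subsequence consists of the trials on which the $g$-th estimator is used by every agent. For the network, let $\mathcal C_1,\dots,\mathcal C_{\bar\chi}$ be a minimum clique cover of $\mathcal G_\gamma$ with sizes $M_c := |\mathcal C_c|$ and $\sum_c M_c = M$. Because any two agents in the same $\mathcal C_c$ are at $\mathcal G$-distance at most $\gamma$, a synchronization broadcast by one clique-mate reaches every other clique-mate within $\gamma$ bandit trials---exactly the gap between two successive uses of the same $g$-th estimator. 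This alignment is precisely what the subsampling trick buys us.

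Second, I would argue that restricted to a fixed pair $(\mathcal C_c,g)$, Algorithm~\ref{alg:decentralized_feducb} is (up to the delay) an instance of \textsc{Centralized FedUCB} executed by the $M_c$ agents in $\mathcal C_c$ on a horizon of $T_g$ trials, sharing the same tree-based \textsc{Privatizer}. The delay of up to $\gamma$ trials between issuing a synchronization request and receiving the neighbors' statistics is absorbed into the $(1+L^2)^{-1}$ factor in the trigger $D/((\Delta t_{i,g}+1)(1+L^2))$: between the trigger and the next use of the $g$-th estimator, each clique-mate's noise-free Gram matrix can grow by at most a rank-one term of operator norm $L^2$, so the perturbation to $\log\det(\bm V^{(g)}_{i,t})$ is bounded by a constant multiple of $1+L^2$ and does not inflate the elliptical-potential doubling argument of Lemma~\ref{cent_elliptical_potential}. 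Under this calibration, Proposition~\ref{prop:beta_cent_app} (with the union bound over all $\gamma\bar\chi$ subproblems) still certifies $(\alpha,M,T)$-accurate confidence widths, and Lemma~\ref{cent_immediate_pseudoregret} continues to upper-bound the per-trial pseudoregret of every $i\in\mathcal C_c$ on $g$-trials by $2\bar\beta_T\lVert \bm x^{(g)}_{i,t}\rVert_{(\bm V^{(g)}_{i,t})^{-1}}$.

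Third, applying Theorem~\ref{thm:regret_cent_app} to the $(\mathcal C_c,g)$-subproblem yields, on its $T_g$ trials,
\begin{equation*}
\mathcal R_{c,g} \;=\; O\!\left(\bar\beta_T \sqrt{M_c\, T_g\, d\,\log\!\left(\tfrac{\rho_{\max}}{\rho_{\min}}+\tfrac{TL^2}{\gamma d\rho_{\min}}\right)}\right).
\end{equation*}
The hypothesis $n=\Omega(d(\bar\chi\gamma)(1+L^2)^{-1}\log(\cdot))$ on the total number of synchronization rounds is exactly the centralized requirement $\Omega(d\log(\cdot))$ multiplied by the $\bar\chi\gamma$ subproblems, guaranteeing each one sits in the ``low-regret regime''. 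Summing over $c,g$ and applying Cauchy--Schwarz via $\sum_c \sqrt{M_c}\le \sqrt{\bar\chi\,M}$ collapses the double sum into
\begin{equation*}
\mathcal R_M(T) \;=\; \sum_{c=1}^{\bar\chi}\sum_{g=1}^{\gamma} \mathcal R_{c,g} \;=\; O\!\left(\bar\beta_T\sqrt{M(\bar\chi(\mathcal G_\gamma)\cdot\gamma)\, T\, d\,\log\!\left(\tfrac{\rho_{\max}}{\rho_{\min}}+\tfrac{TL^2}{\gamma d\rho_{\min}}\right)}\right),
\end{equation*}
and expanding $\bar\beta_T$ using Proposition~\ref{prop:beta_cent_app} reproduces the claimed bound.

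The main obstacle will be the delay bookkeeping in the second step: I must verify that, for every $i\in\mathcal C_c$ and every $g$-trial $t$, the Gram matrix $\bm V^{(g)}_{i,t}$ contains exactly the clique-mates' action/reward contributions up to their last fully-propagated $g$-broadcast, and that the ``in-flight'' broadcasts during the $\gamma$-round propagation window perturb $\bm V^{(g)}_{i,t}$ spectrally only by an amount controllable by the $(1+L^2)$ relaxation. Once this monotone-inclusion property is established simultaneously for all clique-mates, the rest of the argument tracks the centralized proof of Theorem~\ref{thm:regret_cent_app} essentially line by line.
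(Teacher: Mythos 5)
Your proposal is correct and follows essentially the same route as the paper: decompose the regret over the cliques of a clique cover of $\mathcal G_\gamma$ and over the $\gamma$ round-robin subsequences, treat each (clique, index) pair as an instance of the centralized argument on $|C|$ agents and $T/\gamma$ trials with the $(1+L^2)$ factor absorbing the one-extra-round delay in the log-determinant trigger, and sum. Your Cauchy--Schwarz step $\sum_c \sqrt{M_c}\le\sqrt{\bar\chi M}$ is in fact the cleaner way to land exactly on the stated $\sqrt{M\bar\chi\gamma Td}$ leading factor (the paper simply invokes $|C|\le M$), but this is a refinement within the same argument, not a different one.
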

\begin{proof}
The proof for this setting is similar to the centralized variant. We can first partition the power graph $\mathcal G_\gamma$ of the communication network $\mathcal G$ into a clique cover $\mathcal C = \cup_{i} C_i$. The overall regret can then be decomposed as the following.
\begin{align*}
    \text{Regret}(T) &= \sum_{i=1}^M \sum_{t=1}^T r_{i, t} \\
    &= \sum_{C \in \mathcal C} \sum_{i \in C} \sum_{t=1}^T r_{i, t} \\
    &= \sum_{C \in \mathcal C} \text{Regret}_C(T).
\end{align*}
Here, $\text{Regret}_C(T)$ denotes the cumulative pseudoregret of all agents within the clique $C$. It is clear that since there is no communication between agents in different cliques, their behavior is independent and we can analyse each clique separately. Now, consider $\kappa$ sequences given by $s_1, ..., s_\kappa$, where $s_i = (i, i+\kappa, i+2\kappa, ..., i + (\lceil T/\kappa \rceil -1)\kappa)$. For any clique $C$ we can furthermore decompose the regret as follows.
\begin{align*}
    \text{Regret}_C(T) &= \sum_{i \in C} \sum_{t=1}^T r_{i, t} \\
    &= \sum_{i \in C} \sum_{j=1}^\kappa \sum_{t\in s_j} r_{i, t} \\
    &= \sum_{j=1}^\gamma \text{Regret}_{C, j}(T).
\end{align*}
Here $\text{Regret}_{C, j}(T)$ denotes the cumulative pseudoregret of the $j^{th}$ subsequence. We will now bound each of these regret terms individually, with an identical argument as the centralized case. This can be done since the behavior of the algorithm in each of these subsequences is independent: each sequence $s_j$ corresponds to a different Gram matrix and least-squares estimate, and is equivalent to each agent running $\gamma$ parallel bandit algorithms. We now bound each of the regret terms $\text{Regret}_{C, j}(T)$ via an identical argument as the centralized case. Let $\bar{\bm \sigma}_j$ be the subsequence of $[T]$ containing every $j^{th}$ index, i.e., $\bar{\bm \sigma}_j = j, j+\gamma, j+2\gamma, ..., j+\lceil T/\gamma -1 \rceil\gamma$. For any clique $C$, and index $j$ we compare the pulls of each agent within $C$ to the pulls taken by an agent pulling arms in a round-robin manner $(\bm x_{i, j})_{i \in C, j \in \bar{\bm \sigma}_j}$. This corresponds to a total of $|C|T/\gamma$ pulls.

Now, it is crucial to see that according to Algorithm~\ref{alg:decentralized_feducb}, if a signal to synchronize has been sent by any agent $i \in C$ at any time $t$ (belonging to subsequence $j$), then the $j^{th}$ parameter set $\bm V_{i, t}^{(j)}$ is used first at time $t$ (by each agent in $C$), then at time $t+\gamma$ (at which time each agent in $C$ broadcasts their parameters, since by this time each other agent will have received the signal to synchronize, as they are at most distance $\gamma$ apart), and next at time $t+2\gamma$, upon which they will be fully synchronized. Now, if we denote the rounds $n_{C, j} \subset \bar{\bm\sigma}_j$ as the rounds in which each agent broadcasted their $j^{th}$ parameter sets, then for any $\tau \in n_{C, j}$, all agents in $C$ have identical $j^{th}$ parameter sets at instant $\tau+1$, and for each instant $\tau - 1$, all agents in $C$ will obey the synchronization threshold for the $j^{th}$ set of parameters, (log-det condition).

Now, we denote by $\bm W_{i, t}^{(j, C)}$ the Gram matrix obtained by the hypothetical round-robin agent for subsequence $j$ in clique $C$. By Lemma~\ref{cent_elliptical_potential}, we have that
\begin{align}
    \sum_{t\in \bar{\bm\sigma}_j}\sum_{i \in C} \lVert \bm x_{i, t} \rVert_{(\bm W_{i, t}^{(j, C)})^{-1}}^2 \leq 2d\log\left(1+ \frac{TL^2}{\gamma d\rho_{\min}}\right).
\end{align}
After any round $T_k$ of synchronization, consider the cumulative Gram matrices of all observations obtained until that round as $\bm V_k^{(j,C)}, k = 1, ..., n_{j, C}-1$, regularized by $|C|\rho_{\min}\bm I$, i.e., $\bm V_k^{(j,C)} = \sum_{i \in C}\sum_{t\in\bar{\bm\sigma}_j : t < T_k} \bm x_{i, t}\bm x_{i, t}^\top + |C|\rho_{\min}\bm I$. Finally, let $\bm V_p$ denote the (regularized) $j^{th}$ Gram matrix with all trials from agents within $C$ at time $T$, and $\bm V_0^{(j,C)} = |C|\rho_{\min}\bm I$. Therefore, we have that $\det(\bm V_0^{(j,C)}) = (|C|\rho_{\min})^d$, and that $\det(\bm V_{n_{j, C}}) \leq \left(\frac{\text{tr}(\bm V_{n_{j, C}}^{(j,C)})}{d}\right)^d \leq (|C|\rho_{\max} + |C|TL^2/(\gamma d))^d$. Therefore, for any $\nu > 1$,
\begin{align*}
    \log_\nu\left(\frac{\det(\bm V_{n_{j, C}}^{(j,C)})}{\det(\bm V_0^{(j,C)})}\right) \leq d \log_\nu\left(\frac{\rho_{\max}}{\rho_{\min}} + \frac{TL^2}{\gamma d\rho_{\min}}\right).
\end{align*}
Let $R = \left\lceil d \log_\nu\left(\frac{\rho_{\max}}{\rho_{\min}} + \frac{TL^2}{\gamma d\rho_{\min}}\right)\right\rceil$. It follows that in all but $R$ periods between synchronization, 
\begin{align}
    1 \leq \frac{\det(\bm V_k^{(j,C)})}{\det(\bm V_{k-1}^{(j,C)})} \leq \nu. \label{cent_gap_bound_app}
\end{align}
We consider the event $E$ to be the period $k$ when Equation \ref{cent_gap_bound_app} holds. Now, for any $T_{k-1} \leq t \leq T_k$, consider the immediate pseudoregret for any agent $i$. By Lemma~\ref{cent_immediate_pseudoregret}, we have for any agent $i \in C$ and $t \in \bar{\bm\sigma}_j$:
\begin{align*}
    r_{i, t} &\leq 2\bar\beta_T \lVert x_{i, t} \rVert_{(\bm V_{i, t}^{(j,C)})^{-1}} \\
    &\leq 2\bar\beta_T \lVert x_{i, t} \rVert_{(\bm G_{i, t}^{(j,C)} + |C|\rho_{\min}\bm I)^{-1}} \tag{$\bm V_{i, t}^{(j,C)} \succcurlyeq \bm G_{i, t}^{(j,C)} + |C|\rho_{\min}\bm I$}\\
    &\leq 2\bar\beta_T \lVert x_{i, t} \rVert_{(\bm W_{i, t}^{(j,C)})^{-1}} \cdot \sqrt{\frac{\det(\bm W_{i, t}^{(j,C)})}{ \det(\bm G_{i, t}^{(j,C)} + |C|\rho_{\min}\bm I)}} \\
    &\leq 2\bar\beta_T \lVert x_{i, t} \rVert_{(\bm W_{i, t}^{(j,C)})^{-1}} \cdot \sqrt{\frac{\det(\bm V_{k}^{(j,C)})}{ \det(\bm G_{i, t}^{(j,C)} + |C|\rho_{\min}\bm I)}} \tag{$\bm V_{k}^{(j,C)} \succcurlyeq \bm W_{i, t}^{(j,C)}$}\\
    &\leq 2\bar\beta_T \lVert x_{i, t} \rVert_{(\bm W_{i, t}^{(j,C)})^{-1}} \cdot \sqrt{\frac{\det(\bm V_{k}^{(j,C)})}{ \det(\bm V_{k-1}^{(j,C)})}} \tag{$\bm G_{i, t}^{(j,C)} + |C|\rho_{\min}\bm I  \succcurlyeq \bm V_{k-1}^{(j,C)}$}\\
    &\leq 2\nu\bar\beta_T \lVert x_{i, t} \rVert_{(\bm W_{i, t}^{(j,C)})^{-1}}. \tag{Event $E$ holds}\\
\end{align*}
Now, we can sum up the immediate pseudoregret over all such periods where $E$ holds to obtain the total regret for these periods. With probability at least $1-\alpha$,
\begin{align*}
    \text{Regret}_{C, j}(T, E) &= \sum_{i\in C}\sum_{t \in \bar{\bm\sigma}_j : \text{E is true}} r_{i, t} \\
    &\leq \sqrt{\frac{|C|T}{\gamma}\left(\sum_{i\in C}\sum_{t \in \bar{\bm\sigma}_j : \text{E is true}} r^2_{i, t}\right)} \\
    &\leq 2\nu\bar\beta_T\sqrt{\frac{|C|T}{\gamma}\left(\sum_{i\in C}\sum_{t \in \bar{\bm\sigma}_j : \text{E is true}} \lVert \bm x_{i, t} \rVert_{(\bm W_{i, t}^{(j,C)})^{-1}}\right)} \\
    &\leq 2\nu\bar\beta_T\sqrt{\frac{|C|T}{\gamma}\left(\sum_{i\in C}\sum_{t \in \bar{\bm\sigma}_j} \lVert \bm x_{i, t} \rVert_{(\bm W_{i, t}^{(j,C)})^{-1}}\right)} \\
    &\leq 2\nu\bar\beta_T\sqrt{2\frac{|C|T}{\gamma}d\log_\nu\left(1+ \frac{TL^2}{\gamma d\rho_{\min}}\right)}. \\
\end{align*}
Now let us consider the periods in which $E$ does not hold for any subsample $j$. In any such period between synchronization of length $t_k = T_{k} - T_{k-1}$, we have, for any agent $i$, the regret accumulated given by:
\begin{align*}
    \text{Regret}_{C, j}([T_{k-1}, T_{k}]) &= \sum_{t=T_{k-1}}^{T_k}\sum_{i\in C} r_{i, t} \\
    &\leq 2\nu\bar\beta_T \left(\sum_{i\in C}\sqrt{t_k\sum_{t=T_{k-1}}^{T_k} \lVert \bm x_{i, t}\rVert^2_{(\bm V_{i, t}^{(j)})^{-1}}}\right) \\
    &\leq 2\nu\bar\beta_T \left(\sum_{i\in C} \sqrt{t_k\log_\nu\left(\frac{\det(\bm V^{j}_{i, t+t_k})}{\det(\bm V^{j}_{i, t})}\right) }\right) \\
    &\leq 2\nu\bar\beta_T \left(\sum_{i \in C} \sqrt{t_k\log_\nu\left(\frac{\det(\bm G^j_{i, t+t_k} + |C|\rho_{\max}\bm I)}{\det(\bm G^j_{i, t} + |C|\rho_{\min}\bm I)}\right) }\right) \\ \intertext{By Algorithm~\ref{alg:centralized_feducb}, we know that for all agents, $ t_k\log_\nu\left(\frac{\det(\bm G_{i, t+t_k} + M\rho_{\max}\bm I)}{\det(\bm G_{i, t} + M\rho_{\min}\bm I)}\right) \leq D$ (since there would be a synchronization round otherwise), therefore}
    &\leq 2\nu\bar\beta_T|C|\sqrt{D}. \\
\end{align*}
Now, note that of the total $p$ periods between synchronizations, only at most $R$ periods will not have event $E$ be true. Therefore, the total regret over all these periods can be bound as,
\begin{align*}
    \text{Regret}_{C, j}(T, \bar{E}) &\leq R\cdot2\nu\bar\beta_T|C|\sqrt{D} \\
    &\leq 2\nu\bar\beta_T|C|\sqrt{D}\left(d \log_\nu\left(\frac{\rho_{\max}}{\rho_{\min}} + \frac{TL^2}{d\rho_{\min}}\right) + 1\right).
\end{align*}
In a manner identical to the centralized case, we can obtain the total pseudoregret within a clique $C$ for subsampling index $j$ by choosing an appropriate value of $D$. Note that since the broadcast between agents happens at an additional delay of 1 trial, the value $D$ must be scaled by a factor of $1 + \sup_{\bm x \in \mathcal D_{i, t}} \lVert \bm x \rVert_2^2$ to bound the additional term in the determinant (by the matrix-determinant lemma), giving us the extra $1+L^2$ term from the proof. Finally, summing up the above over all $C \in \mathcal C$ and $j \in [\gamma]$ and noting that $|C| \leq M \forall C \in \mathcal C$, and that $|\mathcal C| = \bar{\chi}(\mathcal G_\gamma)$ gives us the final result.
\end{proof}
\begin{proposition}
Fix $\alpha > 0$. If each agent $i$ samples noise parameters $\bm H_{i, t}$ and $\bm h_{i, t}$ using the tree-based Gaussian mechanism mentioned above for all $n$ trials of $\bar{\bm\sigma}$ in which communication occurs, then the following $\rho_{\min}, \rho_{\max}$ and $\kappa$ are $(\alpha/2nM, \bar{\bm\sigma})$-accurate bounds:
\begin{align*}
    \rho_{\min} = \Lambda,\ \rho_{\max} = 3\Lambda,\ \kappa \leq \sqrt{m(L^2+1)\left(\sqrt{d}+2\log(\sfrac{2nM}{\alpha})\right)/(\sqrt{2}\varepsilon)}.
\end{align*}
\end{proposition}
\begin{proof}
The proof follows directly from Shariff and Sheffet~\cite{shariff2018differentially}, Proposition 11. Instead of $\alpha/2n$, we replace the parameter as $\alpha/2nM$.
\end{proof}
\begin{remark}[Decentralized Protocol]
Decentralized \textsc{FedUCB} obtains similar bounds for $\rho_{\min}, \rho_{\max}$ and $\kappa$, with $m = 1 +\lceil \log(T/\gamma)\rceil$. An additional term of $\log(\gamma)$ appears in $\Lambda$ and $\kappa$, since we need to now maintain $\gamma$ partial sums with at most $T/\gamma$ elements in the worst-case (see Appendix). Unsurprisingly, there is no dependence on the network $\mathcal G$, as privatization is done at the source itself.
\end{remark}
\textbf{Discussion}. In the decentralized version of the algorithm, each agent maintains $\gamma$ sets of parameters that are used in a round-robin manner. Note that this implies that each parameter set is used at most $T/\gamma$ times per agent. This implies that in the worst case, each agent will communicate at most $T/\gamma$ partial sums related to this parameter set, hence needing at most $1 +\lceil \log(T/\gamma)\rceil$ separate nodes of the tree-based mechanism for the particular set of parameters, which leads to the additional $\log\gamma$ in the bounds as well. Next, when determining $\kappa$, each agent will not require $M$-accurate bounds, since it communicates with only $|C|$ other agents. However, in the worst case, a centrally-positioned node belongs to a small clique but can still communicate with all other $M-1$ nodes (i.e., they can still obtain the partial sums broadcasted), and hence we maintain the factor $M$ to ensure privacy.
\section*{Additional Experiments}
\begin{figure*}[t!]
  \includegraphics[width=\linewidth]{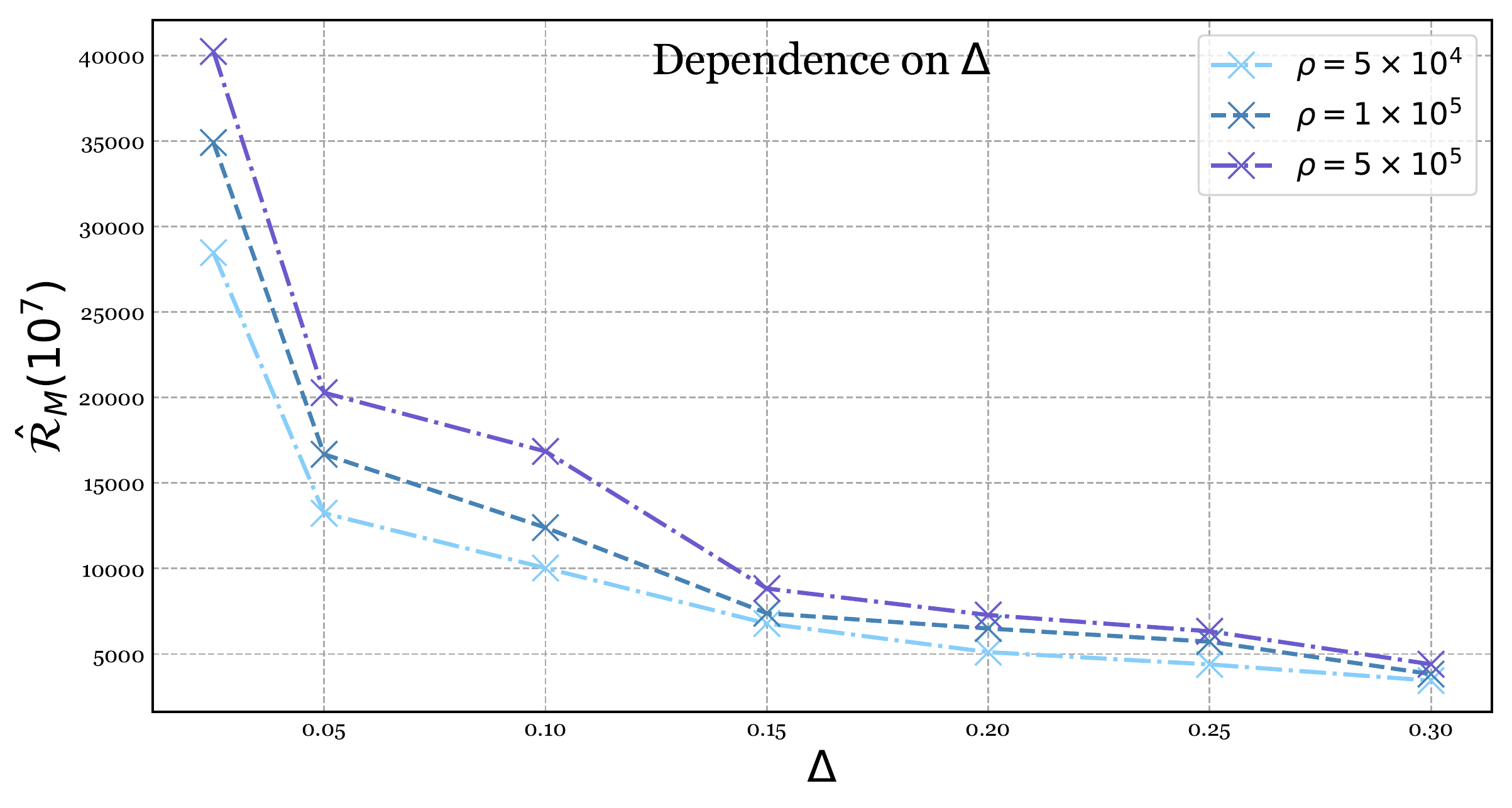}
  \caption{An experimental comparison of centralized \textsc{FedUCB} on varying the minimum gap between arms $\Delta$, for various values of the privacy budget $\rho_{\min}$.}
  \label{fig:main_experiments_app}
\end{figure*}
In addition to the experiments in the main paper, we conduct ablations with variations in $\Delta$. Figure~\ref{fig:main_experiments_app} summarizes the results when run on $M=10$ agents for different privacy budgets and arm gaps. As expected, the overall regret decreases as the gap increases, and the algorithm becomes less sensitive to privacy budget altogether.
\bibliographystyle{plain}
\bibliography{main}
\end{document}